\theoremstyle{plain}
\newtheorem{theorem}{Theorem}[section]
\newtheorem{lemma}[theorem]{Lemma}
\theoremstyle{definition}
\newtheorem{assumption}[theorem]{Assumption}
\theoremstyle{remark}
\begin{document}

\title{An Inclusive Theoretical Framework of Robust Supervised Contrastive Loss against Label Noise}


\author{Jingyi Cui, Yi-Ge Zhang, Hengyu Liu, Yisen Wang
\thanks{Jingyi Cui, Yi-Ge Zhang, and Yisen Wang are with the State Key Lab of General Artificial Intelligence, School of Intelligence Science and Technology, Peking University. Hengyu Liu is with the Department of Electronic Engineering, The Chinese University of Hong Kong.}
}



\maketitle

\begin{abstract}
Learning from noisy labels is a critical challenge in machine learning, with vast implications for numerous real-world scenarios. While supervised contrastive learning has recently emerged as a powerful tool for navigating label noise, many existing solutions remain heuristic, often devoid of a systematic theoretical foundation for crafting robust supervised contrastive losses. To address the gap, in this paper, we propose a unified theoretical framework for robust losses under the pairwise contrastive paradigm. In particular, we for the first time derive a general robust condition for arbitrary contrastive losses, which serves as a criterion to verify the theoretical robustness of a supervised contrastive loss against label noise. The theory indicates that the popular InfoNCE loss is in fact non-robust, and accordingly inspires us to develop a robust version of InfoNCE, termed Symmetric InfoNCE (SymNCE). Moreover, we highlight that our theory is an inclusive framework that provides explanations to prior robust techniques such as nearest-neighbor (NN) sample selection and robust contrastive loss. Validation experiments on benchmark datasets demonstrate the superiority of SymNCE against label noise.
\end{abstract}

\begin{IEEEkeywords}
Supervised contrastive learning, robust loss function, machine learning.
\end{IEEEkeywords}

\section{Introduction}

Supervised learning has demonstrated remarkable success across various machine learning domains, including computer vision \cite{krizhevsky2017imagenet, redmon2016you},  information retrieval \cite{zhang2016deep, onal2018neural}, and language processing \cite{howard2018universal, severyn2015twitter}.
Despite the recent success of unsupervised/self-supervised learning, supervised learning is still fundamentally important, especially on domain-specific data or when the sample size is small. 
Unfortunately, real-world data is often noisy, with mislabeled or wrongly labeled data points, which can significantly degrade the performance of a supervised model.
Consequently, learning with label noise becomes an important research problem in machine learning, and has been extensively studied in recent years. 

Recently, supervised and unsupervised contrastive representation learning \cite{khosla2020supervised, chen2020simple} has been introduced to the problem of learning from label noise.
Most methodological studies lean on the sample selection strategy, wielding contrastive learning as a mechanism to select confident samples based on the learned representations \cite{ortego2021multi, karim2022unicon, li2022selective, huang2023twin}. 
Specifically, MOIT \cite{ortego2021multi} employs a mixup-interpolated supervised contrastive learning whose learned embeddings are used to select clean samples through the nearest neighbor (NN) technique.
UNICON \cite{karim2022unicon} splits the dataset into clean and noisy subsets and treats the noisy sets as unlabeled with an unsupervised contrastive loss.
Sel-CL \cite{li2022selective} selects confident sample pairs by assessing the consistency between learned representations and labels, and leverages supervised contrastive learning to in turn improve the sample selection mechanism. 
TCL \cite{huang2023twin} identifies the wrongly labeled samples through a Gaussian mixture model (GMM) based on representations learned by a twin contrastive learning model.
Moreover, supervised contrastive learning with the sample selection mechanism is also widely used in other weakly supervised learning problems such as partial label learning \cite{wang2022pico, wang2023pico+,wen2021leveraged} and semi-supervised learning \cite{yang2022class, sun2024graph}.

Empirical results show that sample selection mechanisms based on supervised contrastive learning usually have significantly higher overall accuracy, especially when the noise rate is relatively low. 
Nonetheless, 
as the selected confident sample set may still contain a small fraction of label noise, the power of supervised contrastive learning fails to yield to the fullest, harming the overall model performances. 
Empirically, supervised contrastive learning significantly outperforms its unsupervised counterpart on clean datasets, whereas it is vulnerable to label noise. 
Therefore, it is of great significance to investigate robust contrastive learning frameworks against label noise.

In Figure \ref{fig::intro}, we compare the linear probing accuracy of supervised contrastive learning (SupCon) and unsupervised contrastive learning (SimCLR) on the benchmark CIFAR-10 and CIFAR-100 datasets under symmetric label noise. We show that SupCon outperforms SimCLR when the noise rate is low, whereas its performance drops severely as the noise rate increases. This result coincides with the empirical and theoretical findings in \cite{cui2023rethinking}. As a little spoiler, we also show that the robust contrastive loss function (SymNCE) proposed in this paper can significantly improve the performance of SupCon against label noise.

\begin{figure}[!h]
	\centering
	\subfloat[{\footnotesize CIFAR-10}]{
		\includegraphics[width=0.45\linewidth, trim = 0 0 0 0]{./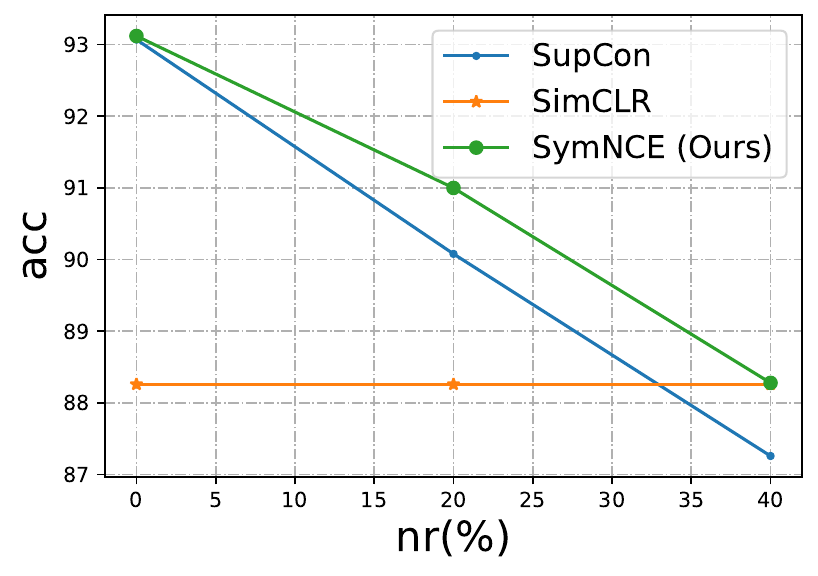}
		\label{fig::clnl_intro_10}
	}
	\subfloat[{\footnotesize CIFAR-100.}]{
		\includegraphics[width=0.45\linewidth, trim = 0 0 0 0]{./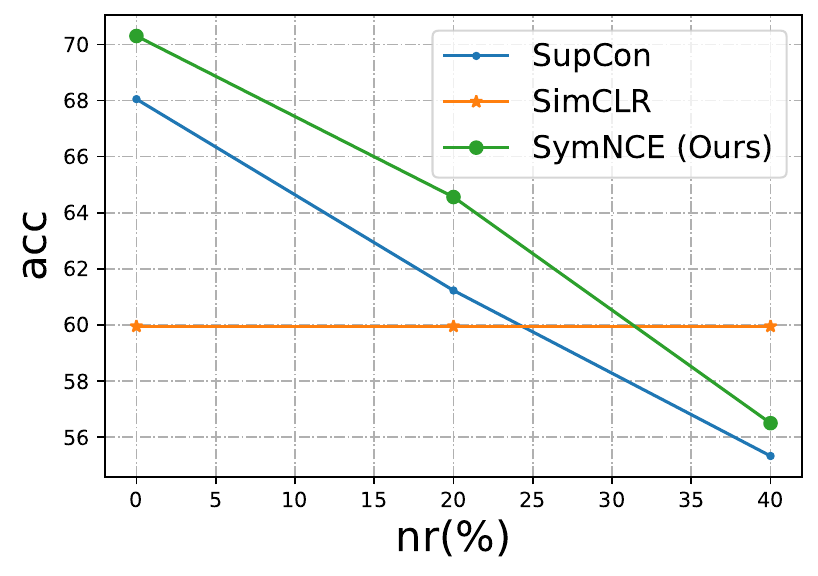}
		\label{fig::clnl_intro_100}
	}
	\caption{Comparisons in linear probing accuracy between supervised contrastive learning (SupCon), unsupervised contrastive learning (SimCLR), and robust supervised contrastive learning (SymNCE).}
	\label{fig::intro}
\end{figure}




Currently, research on robust supervised contrastive learning remains scarce. 
\cite{navaneet2022constrained} selects the $k$-nearest neighbors of the anchor sample as same-class positives to alleviate overfitting to label noise. Similar to the sample selection models for noisy label learning, this approach cannot guarantee that all selected samples are clean. Also, it lacks theoretical guarantees. 
\cite{cui2023rethinking} investigate the theoretical properties of supervised contrastive learning under label noise, but fails to propose robust contrastive learning methods.
Perhaps the most related to the study in this paper, RINCE \cite{chuang2022robust} proposed a robust loss function against noisy views which is proved to be a lower bound of Wasserstein Dependency Measure even under label noise. However, the assumption of noisy view generalization in its theoretical analysis is different from that of the standard label noise setting.
In short, there is still a lack of robust contrastive learning against label noise with sound theoretical guarantees.



Under such background, we aim to establish a comprehensive theoretical framework for robust supervised contrastive losses against label noise. Specifically, we first derive the robust condition for arbitrary pairwise contrastive losses, which serves as a criterion to verify whether a contrastive loss is robust. 
This general robust condition provides a unified understanding of existing robust contrastive methods and can inspire new robust loss functions.
Based on our theory, we show that the widely used InfoNCE loss fails to meet the general robust condition, and we propose its robust counterpart called \textbf{Sym}metric Info\textbf{NCE} (SymNCE) by adding a properly designed \textbf{Rev}erse Info\textbf{NCE} (RevNCE) to the InfoNCE loss function.

The contributions of this paper are summarized as follows.
\begin{itemize}
	\item We for the first time establish a unified theoretical framework for robust supervised contrastive losses against label noise. 
	We highlight that our theory serves as an inclusive framework that applies to existing robust contrastive learning techniques such as nearest neighbor (NN) sample selection and the RINCE loss \cite{chuang2022robust}.
	\item Inspired by our theoretical framework, we propose a robust counterpart of the widely used InfoNCE loss function called SymNCE by adding InfoNCE with RevNCE, which is a deliberately designed loss based on our derived robust condition. RevNCE helps InfoNCE to be robust and meanwhile functions similarly as InfoNCE, i.e. aligning the positive samples and pushing away the negatives.
	\item We empirically verify that our SymNCE loss is comparable and even outperforms existing state-of-the-art robust loss functions for learning with label noise.
\end{itemize}

This paper is organized as follows. In Section \ref{sec::relatedworks}, we introduce related works about noisy label learning and supervised contrastive learning. In Section \ref{sec::robust_theory}, we establish the theoretical framework of supervised contrastive learning under label noise, and present the main theorems, i.e., the robust condition of contrastive losses. Based on the main theorem, we demonstrate that InfoNCE is non-robust, and propose its robust counterpart called SymNCE in Section \ref{sec::SymNCE}. In Section \ref{sec::discussion}, we discuss that our theoretical result is an inclusive framework applying to both nearest-neighbor (NN) sample selection and RINCE. Finally, in Section \ref{sec::exp}, we conduct validation experiments to verify the theoretical findings.

\section{Related Works}\label{sec::relatedworks}

\subsection{Learning with Label Noise} 
The major approaches of learning with label noise include robust architecture \cite{goldberger2017training, han2018masking, yao2018deep}, robust regularization \cite{lukasik2020does, pereyra2017regularizing, wei2021open}, sample selection \cite{han2018co, song2019selfie, yu2019does,wang2018iterative}, loss adjustment \cite{hendrycks2018using,ma2018dimensionality,patrini2017making, zhang2021approximating}, and robust loss function \cite{ghosh2017robust, ma2020normalized, wang2019symmetric, zhang2018generalized}. 
While robust architecture adds noise adaptation layer or dedicated architectures to the deep neural networks, robust regularization explicitly or implicitly regularizes the model complexity and prevents overfitting to the noisy labels. Sample selection adopts specific techniques or network designs to identify clean samples. Though these methodologies have exhibited empirical success, they often lean on intricate designs and can sometimes appear heuristic in nature. Loss adjustment modifies the loss function during training, whereas the effectiveness relies heavily on the estimated noise transition matrix. 

Diverging from the aforementioned strategies, robust loss functions typically come with the assurance of theoretical robustness and effectiveness. For instance, \cite{ghosh2017robust} first theoretically proved the general robust condition for supervised classification losses, and showed that the Mean Absolute Error (MAE) loss is robust whereas Cross Entropy (CE) is not. 
However, as MAE performs poorly on complex datasets, \cite{zhang2018generalized} generalized the MAE and CE losses, and proposed the Generalized Cross Entropy (GCE) loss by applying the Box-Cox transformation to the probabilistic predictions. 
\cite{wang2019symmetric} proposed the Symmetric Cross Entropy (SCE) loss by combining the CE loss with the provably robust Reverse Cross Entropy (RCE) loss. 
\cite{ma2020normalized} showed that any supervised loss can be robust to noisy labels by a simple normalization, and proposed the Active Passive Loss (APL) by combining two robust loss functions.

\subsection{Supervised Contrastive Learning}

Contrastive learning algorithms \cite{chen2020simple, he2020momentum,wang2021residual} were first proposed for self-supervised representation learning, where unsupervised contrastive losses pulls together an anchor and its augmented views in the embedding space. 
\cite{khosla2020supervised} extended contrastive learning to supervised training and proposed the Supervised Contrastive (SupCon) loss which takes same-class augmented examples as the positive labels within the InfoNCE loss. 
SupCon achieves significantly better performance than the state-of-the-art CE loss, especially on large-scale datasets.

\textbf{Supervised contrastive learning with label noise.}
Recently, supervised contrastive learning has been introduced to solve weakly supervised learning problems such as noisy label learning \cite{huang2023twin, li2022selective, ortego2021multi, yan2022noise, yang2022class}. For example, \cite{ortego2021multi} proposed the interpolated contrastive learning which adopts the interpolated samples in the contrastive loss to avoid overfitting to noisy labels, whose learned embeddings are then used to select clean samples through the nearest neighbor (NN) technique.
\cite{li2022selective} not only selected confident samples for classifier training with the NN technique, but also mined positive pairs from the confident same-class samples for the training of contrastive representation learning.
\cite{huang2023twin} proposed the twin contrastive learning model to learn robust representations, where the wrongly labeled samples are recognized as the out-of-distribution samples through a Gaussian mixture model (GMM).
\cite{yan2022noise} leveraged the negative correlations from the noisy data to avoid same-class negatives for contrastive learning, whereas the positive sample selection procedure remains unsupervised.
\cite{yang2022class} proposed to use supervised contrastive learning for semi-supervised learning, where the learned representations are used to refine pseudo labels.

\textbf{Robustness of contrastive learning against label noise.}
Aside from the empirical success of supervision contrastive learning with label noise, relatively few works focus on robust supervised contrastive learning, especially those with theoretical guarantees.
As a byproduct in \cite{navaneet2022constrained}, constraining the same-class positives in contrastive learning within the $k$-nearest neighbors of the anchor sample empirically alleviates overfitting to label noise. Yet this result lacks theoretical guarantees. 
Inspired by the robust condition for supervised classification losses in \cite{ghosh2017robust}, \cite{chuang2022robust} proposed the robust InfoNCE (RINCE) loss function against noisy views which is proved to be a lower bound of Wasserstein Dependency Measure even under label noise. However, the assumption of noisy view generalization in its theoretical analysis is different from that of the standard label noise setting.
On the other hand, some theoretical studies focus on the unsupervised contrastive pretraining approaches, and prove the robustness of downstream classifiers with features learned by self-supervised contrastive learning \cite{cheng2021demystifying, xue2022investigating} without using noisy labels in pre-training stage. 

To summarize, while supervised contrastive learning has demonstrated immense empirical promise in navigating label noise, the design and theoretical investigation of robust contrastive losses is still under-exploited.
To fill in the blank, in this paper, we propose a unified theoretical framework for robust contrastive losses. We derive a general robust condition for arbitrary contrastive losses, inspired by which we further propose a robust counterpart of the widely used InfoNCE loss called SymNCE.

\section{Robust Condition for Contrastive Losses}\label{sec::robust_theory}

In this section, we first introduce some preliminaries for the mathematical formulation of the risks for contrastive losses and label noise. Then we propose the formal formulation of contrastive risks under the distribution of label noise. After that, we can derive a general robust condition for arbitrary contrastive losses, and discuss the relationship between our theoretical result and related works. 
All proofs are shown in Section \ref{sec::proof1}.

\subsection{Preliminaries}\label{sec::prelim}

\textbf{Notations.} 
Suppose that random variables $X \in \mathcal{X}$ and $c \in [C]:=\{1,\ldots,C\}$. Let the input data $\{(x_i,y_i)\}_{i\in[n]}$ be i.i.d.~sampled from the joint distribution $\mathrm{P}(X,c)$. 
For $i \in [C]$, we denote the marginal distribution $\pi_i = \mathrm{P}(c=i)$, and 
denote class conditional distribution $\rho_i = \mathrm{P}(x|c=i)$.
Under the noisy label distribution, we denote $\tilde{c} \in [C]$ as the random variable of noisy label, and let the noisy input data $\{(x_i,\tilde{y}_i)\}_{i\in[n]}$ be i.i.d.~sampled from the joint distribution $\mathrm{P}(X,\tilde{c})$. 
For $i \in [C]$, we denote $\tilde{\pi}_i = \mathrm{P}(\tilde{c}=i)$ and $\tilde{\rho}_i = \mathrm{P}(x|\tilde{c}=i)$ as the noisy marginal and class conditional distributions, respectively.
For notational simplicity, we denote $\boldsymbol{\pi} = (\pi_i)_{i \in [C]}$ and $\tilde{\boldsymbol{\pi}} = (\tilde{\pi}_i)_{i \in [C]}$.


\textbf{Mathematical Formulations of Supervised Contrastive Risk.}
We first generalize the supervised contrastive learning loss proposed in \cite{khosla2020supervised} to arbitrary contrastive losses $\mathcal{L}(x, \{x_m^+\}_{m=1}^M, \{x_k^-\}_{k=1}^K; f)$, where $x$ is the anchor sample, $\{x_m^+\}_{m=1}^M$ are $M$ positive samples, $\{x_k^-\}_{k=1}^K$ are $K$ negative samples, and $f : \mathcal{X} \to \mathbb{R}^d$ denotes the representation function. For notational simplicity, we write $\mathcal{L}(\boldsymbol{x}; f)$ instead in the rest of this paper.

For the mathematical formulations, we follow the CURL framework \cite{arora2019theoretical} for the mathematical formulation of supervised contrastive learning under clean labels (without label noise). Note that CURL uses the concept of latent classes to describe the distribution of positive pairs. In supervised contrastive learning, we naturally assume the latent classes to be the annotated classes, since positive samples are selected as those with the same labels. 

The generation process of positive and negative samples is described as follows: (i) draw positive and negative classes $c_+, \{c^-_k\}_{k \in [K]} \sim \boldsymbol{\pi}^{K+1}$; (ii) draw an anchor sample $x$ from class $c^+$ with probability $\rho_{c^+} = \mathrm{P}(x|c = c^+)$; (iii) draw $M$ positive samples $\{x_m^+\}_{m=1}^M$ from class $c^+$ with probability $\rho_{c^+}$; and (iv) for $k \in [K]$, draw negative sample $x_k^-$ from class $c_k^-$ with probability $\rho_{c_k^-}$.

Then the corresponding risk $\mathcal{R}(\mathcal{L}; f)$ for loss $\mathcal{L}(\boldsymbol{x}; f)$ is formulated as
{
\begin{align}\label{eq::cleanrisk}
        \mathbb{E}_{c^+, \{c_k^-\}_{k=1}^K \sim \boldsymbol{\pi}^{K+1}}
	\mathbb{E}_{x,\{x_m^+\}_{m=1}^M \sim \rho_{c^+},\ 
		x^-_k \sim \rho_{c_k^-}, k \in [K]} 
	\mathcal{L}(\boldsymbol{x}; f).
\end{align}
}


\textbf{Label Noise Assumptions.}
We model the generation process of label noise through label corruption. We assume that the label corruption process is conditionally dependent of the true label and independent of data features.
To be specific, denote $\tilde{c} \in [C]$ as the random variable of noisy label, and we use $q_j(i):= \mathrm{P}(\tilde{c}=i | c=j, x)$ to denote the probability of true label $j \in [C]$ corrupted to label $i \in [C]$.
Then according to the law of total probability, the posterior probability of noisy labels can be expressed as $\mathrm{P}(\tilde{c}=i|x) = \sum_{j=1}^C q_j(i) \mathrm{P}(c=j|x)$.

Next, we take the classic symmetric label noise assumption as an example to formulate label noise. This assumption is common and widely adopted in the community of learning with label noise \cite{ghosh2017robust, ma2020normalized, natarajan2013learning, wang2019symmetric}. The discussions about asymmetric label noise can be found in the Appendix \ref{sec::proof1}.

\begin{assumption}[Symmetric label noise]\label{ass::symnoise}
	For noise rate $\gamma \in (0,1)$, we assume that
	there holds
	$q_i(i)=1-\gamma$ and $q_i(j)=\gamma/(C-1)$ for all $j\neq i$.
\end{assumption}

In Assumption \ref{ass::symnoise}, we assume that the label is corrupted to a noisy label with probability $\gamma$ and remains clean with probability $1-\gamma$, where a label has uniform probability of corrupting to any of the other labels.

\subsection{Supervised Contrastive Risks under Label Noise}\label{sec::noisyrisk}

For supervised contrastive learning under label noise, the positive pairs are selected according to the (corrupted) annotated labels. Therefore, we combine the framework of supervised contrastive risk and the label noise assumptions shown in the previous section to formulate the supervised contrastive risk under label noise. 

The generation process of positive and negative samples under label noise can be described as follows: (i) draw noisy positive and negative classes $\tilde{c}_+, \{\tilde{c}^-_k\}_{k \in [K]} \sim \tilde{\boldsymbol{\pi}}^{K+1}$; (ii) draw an anchor sample $x$ from class $\tilde{c}^+$ with probability $\tilde{\rho}_{\tilde{c}^+} = \mathrm{P}(x|\tilde{c} = \tilde{c}^+)$; (iii) draw $M$ positive samples $\{x_m^+\}_{m=1}^M$ from class $\tilde{c}^+$ with probability $\tilde{\rho}_{\tilde{c}^+}$; and (iv) for $k \in [K]$, draw negative sample $x_k^-$ from class $\tilde{c}_k^-$ with probability $\tilde{\rho}_{\tilde{c}_k^-}$.

Finally, we can formulate the noisy risk $\widetilde{\mathcal{R}}(\mathcal{L}; f)$ for loss $\mathcal{L}(\boldsymbol{x}; f)$ as
{
\begin{align}\label{eq::noisyrisk}
        \mathbb{E}_{\tilde{c}^+, \{\tilde{c}_k^-\}_{k=1}^K \sim \tilde{\boldsymbol{\pi}}^{K+1}}
        \mathbb{E}_{x,\{x_m^+\}_{m=1}^M \sim \tilde{\rho}_{\tilde{c}^+},\ 
		x^-_k \sim \tilde{\rho}_{\tilde{c}_k^-}, k \in [K]}
	\mathcal{L}(\boldsymbol{x}; f).
\end{align}
}

Note that in \cite{khosla2020supervised}, the supervised contrastive loss can be expressed as 
{
\begin{align*}
	\mathcal{L}(x, \{x_m^+\}_{m=1}^M, \{x_k^-\}_{k=1}^K; f) 
	= \frac{1}{M} \sum_{i=1}^M \mathcal{L}_m (x, x_m^+, \{x_k^-\}_{k=1}^K; f),
\end{align*}
}
and thus due to the conditional independence of $x_m^+ | x$, we demonstrate the equivalence between the risks for single and multiple positive samples, i.e.
{
\begin{align}
	\mathcal{R}(\mathcal{L}; f) 
	&= \mathbb{E}_{x,\{x_m^+\}_{m=1}^M,\{x^-_k\}_{k=1}^K} \frac{1}{M} \sum_{i=1}^M \mathcal{L}_m (x, x_m^+, \{x_k^-\}_{k=1}^K; f)
	\nonumber\\
	&= \frac{1}{M} \sum_{i=1}^M  \mathbb{E}_{x, x_m^+, \{x^-_k\}_{k=1}^K} \mathcal{L}_m (x, x_m^+, \{x_k^-\}_{k=1}^K; f)
	\nonumber\\
	&=  \frac{1}{M} \sum_{i=1}^M \mathcal{R}(\mathcal{L}_m; f)
	= \mathcal{R}(\mathcal{L}_m; f).
\end{align}
}
Therefore, in the following, we analyze the case with a single positive sample without loss of generality. 

\subsection{Important Lemmas}

Before we proceed, we introduce two important lemmas.
First, in Lemma \ref{lem::pi}, we derive the relationship between the clean and noisy distributions under the label corruption assumption.

\begin{lemma}\label{lem::pi}
	Under label-dependent label corruption, we have 
	$\tilde{\pi}_i = \sum_{j=1}^C q_j(i)\pi_j$, and 
	$\tilde{\rho}_i(x) = \big(\sum_{j=1}^C q_j(i) \rho_j(x)\pi_j\big)/\big(\sum_{j=1}^C q_j(i)\pi_j\big)$.
\end{lemma}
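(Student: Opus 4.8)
The plan is to establish the two identities in Lemma~\ref{lem::pi} directly from the definitions of the noisy marginal and class-conditional distributions via the law of total probability, exploiting the feature-independence of the corruption process encoded in $q_j(i) = \mathrm{P}(\tilde{c}=i \mid c=j, x)$.

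First I would prove the marginal identity. By the law of total probability over the clean label $c$, we have $\tilde{\pi}_i = \mathrm{P}(\tilde{c}=i) = \sum_{j=1}^C \mathrm{P}(\tilde{c}=i \mid c=j)\,\mathrm{P}(c=j) = \sum_{j=1}^C q_j(i)\pi_j$, where I use that $\mathrm{P}(\tilde{c}=i\mid c=j)$ does not depend on $x$ (so it equals $q_j(i)$ after marginalizing out $x$). This is the routine part.

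Next I would tackle the class-conditional identity, which is the main step. Write $\tilde{\rho}_i(x) = \mathrm{P}(x \mid \tilde{c}=i) = \mathrm{P}(x, \tilde{c}=i)/\mathrm{P}(\tilde{c}=i)$. For the numerator, decompose over the clean label: $\mathrm{P}(x,\tilde{c}=i) = \sum_{j=1}^C \mathrm{P}(x, \tilde{c}=i, c=j) = \sum_{j=1}^C \mathrm{P}(\tilde{c}=i\mid x, c=j)\,\mathrm{P}(x\mid c=j)\,\mathrm{P}(c=j) = \sum_{j=1}^C q_j(i)\rho_j(x)\pi_j$, where the key move is that $\mathrm{P}(\tilde{c}=i\mid x, c=j) = q_j(i)$ by the conditional-independence assumption on the corruption process. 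Dividing by $\mathrm{P}(\tilde{c}=i) = \tilde{\pi}_i = \sum_{j=1}^C q_j(i)\pi_j$ from the first part gives the claimed formula.

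I do not anticipate a genuine obstacle here, since everything follows from Bayes' rule plus the stated independence assumption; the only thing to be careful about is cleanly invoking $\mathrm{P}(\tilde{c}=i \mid c=j, x) = q_j(i)$ at the right moment and making sure the normalization $\mathrm{P}(\tilde{c}=i) > 0$ (guaranteed since $\gamma \in (0,1)$ and the $\pi_j$ are a probability vector, or more generally whenever $\tilde{\pi}_i>0$) so the conditional density is well-defined. The statement is phrased for general label-dependent corruption, so I would present the argument at that level of generality rather than specializing to Assumption~\ref{ass::symnoise}.
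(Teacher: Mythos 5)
Your proposal is correct and follows essentially the same route as the paper: law of total probability over the clean label combined with Bayes' rule and the feature-independence assumption $\mathrm{P}(\tilde{c}=i\mid c=j,x)=q_j(i)$. The only cosmetic difference is that you decompose the joint $\mathrm{P}(x,\tilde{c}=i)$ directly, whereas the paper first writes $\mathrm{P}(\tilde{c}=i\mid x)=\sum_j q_j(i)\mathrm{P}(c=j\mid x)$ and then multiplies and divides by $\mathrm{P}(x)$; these are equivalent rearrangements of the same calculation.
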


\begin{proof}[Proof of Lemma \ref{lem::pi}]
	Under label-dependent label corruption, by the law of total probability, there holds
	\begin{align}\label{eq::Atransition}
		\mathrm{P}(\tilde{c} = i | x) 
		&= \sum_{i=1}^C \mathrm{P}(\tilde{c} = i | c = j, x) \mathrm{P}(c = j | x)
            \nonumber\\
		&= \sum_{i=1}^C q_j(i) \mathrm{P}(c = j | x).
	\end{align}
	By taking expectation of $x$ on both sides of \eqref{eq::Atransition}, we have
	\begin{align}\label{eq::Api}
		\tilde{\pi}_i = \mathrm{P}(\tilde{c} = i) = \sum_{i=1}^C q_j(i) \mathrm{P}(c = j) = \sum_{i=1}^C q_j(i) \pi_j.
	\end{align}
	On the other hand, by Bayes' theorem, there holds
	\begin{align}
		\tilde{\rho}_i(x) &= \mathrm{P}(x | \tilde{c} = i)
		= \frac{\mathrm{P}(\tilde{c} = i | x) \mathrm{P}(x)}{\mathrm{P}(\tilde{c} = i)}.
	\end{align}
	Then combining with \eqref{eq::Atransition} and \eqref{eq::Api}, we have
 {
	\begin{align*}
		\tilde{\rho}_i(x) 
		&= \frac{\sum_{i=1}^C q_j(i) \mathrm{P}(c = j | x)\mathrm{P}(x)}{\sum_{i=1}^C q_j(i) \mathrm{P}(c = j)}
		\nonumber\\
		&= \frac{\sum_{i=1}^C q_j(i) \mathrm{P}(x | c = j)\mathrm{P}(c=j)}{\sum_{i=1}^C q_j(i) \mathrm{P}(c = j)}
		\nonumber\\
		&= \frac{\sum_{i=1}^C q_j(i) \rho_j(x)\pi_j}{\sum_{i=1}^C q_j(i)\pi_j}.
	\end{align*}
 }
\end{proof}

In Lemma \ref{lem::decom}, we decompose the noisy risk for arbitrary contrastive losses without assuming any specific label types.

\begin{lemma}\label{lem::decom}
	For arbitrary contrastive loss function $\mathcal{L}(\boldsymbol{x}; f)$, the noisy contrastive risk with $M=1$ can be decomposed into
 {
	\begin{align}
		\widetilde{\mathcal{R}}(\mathcal{L}; f) 
		&= \sum_{i \in [C]} 
		a(i)^{-1} \sum_{u\in[C]} \sum_{u^+\in[C]} \pi_u \pi_{u^+} 	q_u(i) q_{u^+}(i) 
            \nonumber\\
		&\quad\cdot\mathbb{E}_{\substack{x \sim \rho_u\\x^+ \sim \rho_{u^+}}}
		\mathbb{E}_{\{x_k^-\}_{k=1}^K \overset{\text{i.i.d.}}{\sim} \mathrm{P}_X} \mathcal{L}(\boldsymbol{x}; f),
	\end{align}
 }
	where 
	$a(i) := \sum_{u\in[C]} q_u(i) \pi_u$.
\end{lemma}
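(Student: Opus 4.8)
The plan is to start from the definition \eqref{eq::noisyrisk} of $\widetilde{\mathcal{R}}(\mathcal{L};f)$ with $M=1$, first strip off the $K$ negative samples, and then expand the noisy anchor/positive distribution using Lemma \ref{lem::pi}.

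First I would decouple the negatives. In the generation process the noisy classes $\tilde{c}^+,\tilde{c}_1^-,\dots,\tilde{c}_K^-$ are drawn independently from $\tilde{\boldsymbol{\pi}}$, and conditionally on these classes the points $x,x^+,x_1^-,\dots,x_K^-$ are mutually independent. Hence the block $\{x_k^-\}_{k=1}^K$ is jointly independent of $(x,x^+)$, and the marginal law of each negative is $\sum_{i\in[C]}\tilde{\pi}_i\tilde{\rho}_i=\mathrm{P}_X$, where one uses that label corruption leaves the feature marginal unchanged, i.e. $\sum_i\tilde{\pi}_i\tilde{\rho}_i=\sum_i\pi_i\rho_i=\mathrm{P}_X$. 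Conditioning on $\tilde{c}^+=i$ (which occurs with probability $\tilde{\pi}_i$) therefore gives
\begin{align*}
\widetilde{\mathcal{R}}(\mathcal{L};f)=\sum_{i\in[C]}\tilde{\pi}_i\,\mathbb{E}_{x,x^+\sim\tilde{\rho}_i}\,\mathbb{E}_{\{x_k^-\}_{k=1}^K\overset{\text{i.i.d.}}{\sim}\mathrm{P}_X}\mathcal{L}(\boldsymbol{x};f).
\end{align*}

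Next, abbreviate $g(x,x^+):=\mathbb{E}_{\{x_k^-\}_{k=1}^K\overset{\text{i.i.d.}}{\sim}\mathrm{P}_X}\mathcal{L}(\boldsymbol{x};f)$ and expand the $i$-th summand via Lemma \ref{lem::pi}. Substituting $\tilde{\pi}_i=a(i)$ and $\tilde{\rho}_i(x)=a(i)^{-1}\sum_u q_u(i)\rho_u(x)\pi_u$, the product $\tilde{\pi}_i\,\tilde{\rho}_i(x)\tilde{\rho}_i(x^+)$ cancels one power of $a(i)^{-1}$ and splits into a double sum over $u,u^+\in[C]$; since the integrals over $x$ and $x^+$ then factorize, each summand equals
\begin{align*}
a(i)^{-1}\sum_{u\in[C]}\sum_{u^+\in[C]}\pi_u\pi_{u^+}q_u(i)q_{u^+}(i)\,\mathbb{E}_{x\sim\rho_u,\,x^+\sim\rho_{u^+}}g(x,x^+).
\end{align*}
Summing over $i\in[C]$ and unfolding the definition of $g$ yields exactly the claimed decomposition.

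I expect the only delicate point to be the first step, namely justifying that the $K$ negatives can be pulled out as an i.i.d.\ $\mathrm{P}_X$ block; this rests on (i) conditional independence of all sampled points given the independently drawn noisy classes, and (ii) the invariance of the feature marginal under label noise, $\sum_i\tilde{\pi}_i\tilde{\rho}_i=\mathrm{P}_X$. Everything after that is routine linearity-of-expectation bookkeeping once Lemma \ref{lem::pi} is plugged in.
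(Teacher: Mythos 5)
Your proof is correct, and it reaches the same decomposition via the same two ingredients as the paper (Lemma \ref{lem::pi} and the normalization $\sum_i q_j(i)=1$), but you organize the computation in the reverse order. The paper expands \emph{all} of $\tilde{\pi}_i\tilde{\pi}_{j_1}\cdots\tilde{\pi}_{j_K}$ and $\tilde{\rho}_i,\tilde{\rho}_{j_1},\dots,\tilde{\rho}_{j_K}$ through Lemma \ref{lem::pi} first, producing the large multi-index sum in \eqref{eq::Anegs}, and only afterwards collapses each negative-index block via $\sum_{j_k} q_{v_k}(j_k)=1$ to recover $\sum_{v_k}\pi_{v_k}$, i.e.\ the marginal $\mathrm{P}_X$. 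You instead observe at the outset that $\sum_j\tilde{\pi}_j\tilde{\rho}_j=\mathrm{P}(x,\tilde{c}=j)$ summed over $j$ is just $\mathrm{P}_X$, so the independent negative block can be replaced by an i.i.d.\ $\mathrm{P}_X$ sample before any expansion, and then only the anchor--positive pair needs Lemma \ref{lem::pi}. The single point worth making explicit is the one you flag yourself: the factorization of the joint law into $(x,x^+)$ times the negatives requires both the independence of the drawn noisy classes and the conditional independence of the samples given those classes, which the generation process in Section \ref{sec::noisyrisk} supplies. Your ordering is cleaner and makes the remark after the lemma (that noise only affects the positive distribution) transparent, at the cost of hiding the telescoping identity that the paper's version displays explicitly; both are complete.
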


For conciseness, we only present the proof sketch here and defer the complete proof to Appendix \ref{sec::proof1}.
\begin{proof}[Proof Sketch of Lemma \ref{lem::decom}]
    	By definition of the noisy risk in \eqref{eq::noisyrisk}, there holds
 {
	\begin{align}\label{eq::tildeR11}
		&\widetilde{\mathcal{R}}(\mathcal{L}; f) 
            \nonumber\\
		&= \mathbb{E}_{\tilde{c}^+, \{\tilde{c}_k^-\}_{k=1}^K \sim \tilde{\boldsymbol{\pi}}^{K+1}} \mathbb{E}_{\substack{x, x^+ \sim \tilde{\rho}_{\tilde{c}^+}\\
				x^-_k \sim \tilde{\rho}_{\tilde{c}_k^-}, k \in [K]}} 
		\mathcal{L}(x, x^+, \{x_k^-\}_{k=1}^K; f)
		\nonumber\\
		&= \sum_{i, j_1, \ldots, j_K \in [C]} \tilde{\pi}_i \tilde{\pi}_{j_1} \cdots \tilde{\pi}_{j_K} 
            \nonumber\\
            &\qquad\cdot
		\mathbb{E}_{\substack{x, x^+ \sim \tilde{\rho}_{i}\\
				x^-_k \sim \tilde{\rho}_{j_k}, k \in [K]}} 
		\mathcal{L}(x, x^+, \{x_k^-\}_{k=1}^K; f)
		\nonumber \\
		&= \sum_{i, j_1, \ldots, j_K \in [C]} \sum_{m \in [C]} q_m(i) \pi_m 
            \nonumber\\
            &\qquad\cdot\sum_{l_1 \in [C]} q_{l_1}(j_1) \pi_{l_1} \cdots \sum_{l_K \in [C]} q_{l_K}(j_K) \pi_{l_K}
		\nonumber\\
		&\qquad\cdot
		\mathbb{E}_{\substack{x, x^+ \sim \tilde{\rho}_{i}\\
				x^-_k \sim \tilde{\rho}_{j_k}, k \in [K]}} 
		\mathcal{L}(x, x^+, \{x_k^-\}_{k=1}^K; f).
	\end{align}
 }
	Denoting $a(i) := \sum_{u\in[C]} q_u(i) \pi_u$,
        $A(u,u^+,\boldsymbol{v}) := \mathbb{E}_{\substack{x \sim \rho_u, x^+ \sim \rho_{u^+}\\
				x^-_k \sim \rho_{v_k}, k \in [K]}} 
		\mathcal{L}(x, x^+, \{x_k^-\}_{k=1}^K; f)$,
        and plugging Lemma \ref{lem::pi} in \eqref{eq::tildeR11}, with some calculations we have 
 {
	\begin{align}\label{eq::tildeR21}
		&\mathbb{E}_{\substack{x, x^+ \sim \tilde{\rho}_{i}\\
				x^-_k \sim \tilde{\rho}_{j_k}, k \in [K]}} 
		\mathcal{L}(x, x^+, \{x_k^-\}_{k=1}^K; f)
		\nonumber\\
		&= \idotsint \tilde{\rho}_i(x) \tilde{\rho}_i(x^+) \tilde{\rho}_{j_1}(x_1^-) \cdots \tilde{\rho}_{j_K}(x_K^-) 
            \nonumber\\
            &\ \cdot \mathcal{L}(x, x^+, \{x_k^-\}_{k=1}^K; f) \, dx \, dx^+ \, dx^-_1 \ldots dx^-_K
		\nonumber\\
		&= (a(i)^2 a(j_1) \ldots a(j_K))^{-1} 
		\nonumber\\
		&\cdot \sum_{u, u^+, v_1, \ldots, v_K \in[C]} \pi_u \pi_{u^+} \pi_{v_1} \cdots \pi_{v_K} 
		\nonumber\\ 	
            &\ \cdot q_u(i) q_{u^+}(i) q_{v_1}(j_1) \cdots q_{v_K}(j_K) A(u,u^+,\boldsymbol{v}).
	\end{align}
 }
	Combining \eqref{eq::tildeR11} and \eqref{eq::tildeR21}, we have
 {
	\begin{align}
		&\widetilde{\mathcal{R}}(\mathcal{L}; f) 
            \nonumber\\
		&= \sum_{i, j_1, \ldots, j_K \in [C]} (a(i)^2 a(j_1) \ldots a(j_K))^{-1} 
		\nonumber\\
		&\qquad\cdot \sum_{m, l_1, \ldots, l_K \in [C]} q_m(i) q_{l_1}(j_1) \cdots q_{l_K}(j_K) \cdot \pi_m \pi_{l_1} \cdots \pi_{l_K} 
		\nonumber\\
		&\qquad\cdot \sum_{u, u^+, v_1, \ldots, v_K \in[C]} q_u(i) q_{u^+}(i) q_{v_1}(j_1) \cdots q_{v_K}(j_K) 
		\nonumber\\ 
		&\qquad\cdot \pi_u \pi_{u^+} \pi_{v_1} \cdots \pi_{v_K} A(u,u^+,\boldsymbol{v})
		\nonumber\\
		&= \sum_{i \in [C]}
		a(i)^{-2} \sum_{m \in [C]} \sum_{u\in[C]} \sum_{u^+\in[C]} \pi_m \pi_u \pi_{u^+} q_m(i) q_u(i) q_{u^+}(i) 
		\nonumber\\
		&\qquad\cdot \sum_{j_1 \in [C]} a(j_1)^{-1} \sum_{l_1 \in [C]} \sum_{v_1\in[C]} \pi_{l_1} \pi_{v_1} q_{l_1}(j_1) q_{v_1}(j_1) \cdots 
		\nonumber\\
		&\qquad\cdot \sum_{j_K \in [C]}
		a(j_K)^{-1} \sum_{l_K \in [C]} \sum_{v_K\in[C]} \pi_{l_K} \pi_{v_K} q_{l_K}(j_K) q_{v_K}(j_K)  
            \nonumber\\
		&\qquad\cdot A(u,u^+,\boldsymbol{v}).
		\label{eq::Anegs}
	\end{align}
 }
	Then by separately calculating the positive and negative terms, we have
        {
	\begin{align}
		&\widetilde{\mathcal{R}}(\mathcal{L}; f) 
		\nonumber\\
            &= \sum_{i \in [C]} 
		a(i)^{-1} \sum_{u\in[C]} \sum_{u^+\in[C]} \pi_u \pi_{u^+} q_u(i) q_{u^+}(i) 
		\nonumber\\
		&\qquad\cdot\sum_{v_1, \ldots, v_K \in[C]} \pi_{v_1} \cdots \pi_{v_K} A(u,u^+,\boldsymbol{v})
		\nonumber\\
		&=\sum_{i \in [C]} 
		a(i)^{-1} \sum_{u\in[C]} \sum_{u^+\in[C]} \pi_u \pi_{u^+} 	q_u(i) q_{u^+}(i) 
		\nonumber\\
		&\qquad\cdot\mathbb{E}_{\substack{x \sim \rho_u,\\x^+ \sim \rho_{u^+}}}
		\mathbb{E}_{\{x_k^-\}_{k=1}^K \overset{\text{i.i.d.}}{\sim} \mathrm{P}_X} \mathcal{L}(\boldsymbol{x}; f).
	\end{align}
	}
\end{proof}

\textbf{Remark.} By Lemma \ref{lem::decom}, we see that the label noise only affects the noisy contrastive risk by changing the distribution of positive samples. The negative sample distribution remains unaffected because the negatives are uniformly drawn from the data distribution regardless of their labels. 
On the contrary, for positive samples selected from the noisy distribution, it is probable that a positive pair actually has different ground truth labels, and accordingly making the model overfitting to label noise.

\subsection{Robust Condition for Contrastive Losses}\label{sec::thoerem}

Compared with the contrastive risk under clean label distribution $\mathcal{R}(\mathcal{L}; f)$ in \eqref{eq::cleanrisk}, the noisy contrastive risk $\widetilde{\mathcal{R}}(\mathcal{L}; f)$ in \eqref{eq::noisyrisk} suffers from additional error because of label corruption. 
Specifically, under the noisy label distribution, the positive samples have the same annotated labels, but their true labels can be different. Aligning such noisy positive labels harms the representation learning procedure. 
Therefore, in this part, we first separate the additional risk caused by label corruption from the clean contrastive risk by explicitly deriving the relationship between clean and noisy risks. Then we propose the robust condition for contrastive losses.

In Theorem \ref{thm::riskconsistency}, we show the relationship between the noisy contrastive risk $\widetilde{\mathcal{R}}(\mathcal{L}; f)$ and the clean contrastive risk $\mathcal{R}(\mathcal{L}; f)$. 
\begin{theorem}\label{thm::riskconsistency}
	Assume that the input data is class balanced, i.e. $\pi_i = 1/C$ for $i \in [C]$. Then under Assumption \ref{ass::symnoise}, for an arbitrary contrastive loss $\mathcal{L}(\boldsymbol{x}; f)$, 
	there holds
		\begin{align}\label{eq::riskconsistency}
			\widetilde{\mathcal{R}}(\mathcal{L}; f) 
			&= \Big(1-C/(C-1)\gamma\Big)^2 \mathcal{R}(\mathcal{L}; f) 
			\nonumber\\
			&+ C/(C-1)\gamma \cdot \Big(2-C/(C-1)\gamma\Big) \Delta \mathcal{R}(\mathcal{L}; f),
		\end{align}
	where 
	{
	\begin{align*}
		\Delta \mathcal{R}(\mathcal{L}; f) 
		&:= \mathbb{E}_{c^+, \{c_m^+\}_{m=1}^M, \{c_k^-\}_{k=1}^K \sim \boldsymbol{\pi}^{M+K+1}} 
		\nonumber\\
		&\quad\cdot\mathbb{E}_{x \sim \rho_{c^+},\ x_m^+ \sim \rho_{c_m^+}, m \in [M],\ x^-_k \sim \rho_{c_k^-}, \, k \in [K]}
		\mathcal{L}(\boldsymbol{x}; f).
	\end{align*}
	}
\end{theorem}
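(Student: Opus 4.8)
The plan is to specialize the general decomposition of Lemma~\ref{lem::decom} to the class-balanced, symmetric-noise setting and then match the resulting coefficients against $\mathcal{R}(\mathcal{L};f)$ and $\Delta\mathcal{R}(\mathcal{L};f)$. First I would substitute $\pi_i=1/C$ and the symmetric transition $q_i(i)=1-\gamma$, $q_i(j)=\gamma/(C-1)$ for $j\neq i$ into $a(i)=\sum_{u}q_u(i)\pi_u$; a one-line computation gives $a(i)=\frac{1}{C}\big((1-\gamma)+(C-1)\cdot\tfrac{\gamma}{C-1}\big)=\frac{1}{C}$, so $a(i)^{-1}=C$. Plugging this into Lemma~\ref{lem::decom} collapses the noisy risk to
\[
\widetilde{\mathcal{R}}(\mathcal{L};f)=\frac{1}{C}\sum_{i\in[C]}\sum_{u\in[C]}\sum_{u^+\in[C]} q_u(i)\,q_{u^+}(i)\,\bar{A}(u,u^+),
\]
where I abbreviate $\bar{A}(u,u^+):=\mathbb{E}_{x\sim\rho_u,\,x^+\sim\rho_{u^+}}\mathbb{E}_{\{x_k^-\}_{k=1}^K\sim\mathrm{P}_X}\mathcal{L}(\boldsymbol{x};f)$.

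Next I would evaluate the inner sum $\sum_{i\in[C]}q_u(i)q_{u^+}(i)$, which depends only on whether $u=u^+$. For $u=u^+$ it equals $s_1:=(1-\gamma)^2+\gamma^2/(C-1)$, and for $u\neq u^+$ it equals $s_2:=2\gamma(1-\gamma)/(C-1)+(C-2)\gamma^2/(C-1)^2$. Splitting the triple sum into its diagonal and off-diagonal parts then yields
\[
\widetilde{\mathcal{R}}(\mathcal{L};f)=\frac{1}{C}\Big(s_1\sum_{u}\bar{A}(u,u)+s_2\sum_{u\neq u^+}\bar{A}(u,u^+)\Big).
\]

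The third step is to recognize the two sums in terms of the clean and decoupled risks. Under class balance the negative classes marginalize to $\mathrm{P}_X$ in both \eqref{eq::cleanrisk} and the definition of $\Delta\mathcal{R}$, so for $M=1$ we have $\mathcal{R}(\mathcal{L};f)=\frac{1}{C}\sum_u\bar{A}(u,u)$, while $\Delta\mathcal{R}(\mathcal{L};f)=\frac{1}{C^2}\sum_u\sum_{u^+}\bar{A}(u,u^+)$, since there the anchor class $c^+$ and the positive class $c_m^+$ are drawn independently from $\boldsymbol{\pi}$. Hence $\sum_u\bar{A}(u,u)=C\,\mathcal{R}$ and $\sum_{u\neq u^+}\bar{A}(u,u^+)=C^2\Delta\mathcal{R}-C\,\mathcal{R}$, giving $\widetilde{\mathcal{R}}=(s_1-s_2)\mathcal{R}+Cs_2\,\Delta\mathcal{R}$. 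Finally I would carry out the elementary algebra: clearing denominators turns $(C-1)^2(s_1-s_2)$ into the perfect square $[(C-1)(1-\gamma)-\gamma]^2$, so $s_1-s_2=\big(1-\tfrac{C}{C-1}\gamma\big)^2$, and a parallel simplification gives $Cs_2=\tfrac{C}{C-1}\gamma\big(2-\tfrac{C}{C-1}\gamma\big)$, which is exactly \eqref{eq::riskconsistency}.

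I do not expect a real obstacle here — the argument is a reduction followed by bookkeeping — but the step most in need of care is the correct identification of $\Delta\mathcal{R}(\mathcal{L};f)$ with the full double sum $\frac{1}{C^2}\sum_{u,u^+}\bar{A}(u,u^+)$: one must use that the anchor and positive classes in $\Delta\mathcal{R}$ are sampled independently (so the diagonal $u=u^+$ is not privileged) and that the negatives in both $\mathcal{R}$ and $\Delta\mathcal{R}$ integrate out to $\mathrm{P}_X$, matching the form produced by Lemma~\ref{lem::decom}. The asymmetric-noise case (deferred to the appendix) would follow the same template, but without the simplification $a(i)=1/C$ and without the collapse of $\sum_i q_u(i)q_{u^+}(i)$ to just two values, so there the coefficients will be genuine functions of the full transition matrix.
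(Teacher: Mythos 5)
Your proposal is correct and follows essentially the same route as the paper: both start from Lemma~\ref{lem::decom}, use class balance to get $a(i)=1/C$, exploit the symmetric-noise structure of $q_u(i)$, and identify the diagonal and full double sums over $(u,u^+)$ with $\mathcal{R}$ and $\Delta\mathcal{R}$ respectively. The only (immaterial) difference is bookkeeping order — you collapse $\sum_i q_u(i)q_{u^+}(i)$ first and split on $u=u^+$ versus $u\neq u^+$, whereas the paper splits the $(u,u^+)$ sum relative to the noisy class $i$ before summing over $i$ — and your coefficients $s_1-s_2$ and $Cs_2$ simplify to exactly the stated ones.
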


\begin{proof}[Proof Sketch of Theorem \ref{thm::riskconsistency}]
        Denote $B(u,u^+) := \sum_{v_1, \ldots, v_K \in[C]} \pi_{v_1} \cdots \pi_{v_K} A(u,u^+,\boldsymbol{v})$. Then combining Assumption \ref{ass::symnoise} and Lemma \ref{lem::decom}, by decomposing the summation into
        \begin{align}
            \sum_{u\in[C]} \sum_{u^+\in[C]} 
            = \sum_{u=u^+=i} + \sum_{u=i} \sum_{u^+ \neq i} + \sum_{u \neq i} \sum_{u^+=i} + \sum_{u \neq i} \sum_{u^+ \neq i},
        \end{align}
        we can calculate that
        {
	\begin{align}
		&\widetilde{\mathcal{R}}(\mathcal{L}; f) 
		\nonumber\\
		&= \sum_{i \in [C]} 
		a(i)^{-1} \sum_{u\in[C]} \sum_{u^+\in[C]} \pi_u \pi_{u^+} q_u(i) q_{u^+}(i) B(u,u^+)
		\nonumber\\
		&= \sum_{u \in [C]} a(u)^{-1} \Big(1-\frac{C}{C-1}\gamma\Big)^2 \pi_u^2 B(u,u) 
		\nonumber\\
		&+ \sum_{u \in [C]} a(u)^{-1} \frac{\gamma}{C-1}\Big(1-\frac{C}{C-1}\gamma\Big)\pi_u \sum_{u^+ \in [C]} \pi_{u^+} B(u,u^+)
		\nonumber\\
		&+ \sum_{u^+ \in [C]} a(u^+)^{-1} \frac{\gamma}{C-1}\Big(1-\frac{C}{C-1}\gamma\Big)\pi_{u^+} \sum_{u \in [C]} \pi_{u} B(u, u^+)
		\nonumber\\
		&+ \sum_{i \in [C]} a(i)^{-1} \frac{\gamma^2}{(C-1)^2} \sum_{u \in [C]} \sum_{u^+ \in [C]} \pi_u \pi_{u^+} B(u,u^+).
	\end{align}
 }
	When the input data is class balanced, i.e. $\pi_i=\frac{1}{C}$ for $i \in [C]$, we have for $i \in [C]$,
		$a(i) = \sum_{u\in[C]}q_u(i)\pi_u = \frac{1}{C} \sum_{u\in[C]}q_u(i) = \frac{1}{C}$.
	Then we have
        {
	\begin{align}
		&\widetilde{\mathcal{R}}(\mathcal{L}; f) 
		\nonumber\\
		&= \Big(1-\frac{C}{C-1}\gamma\Big)^2 \sum_{u \in [C]} \pi_u B(u,u) 
		\nonumber\\
		&+ \frac{C\gamma}{C-1}\Big(1-\frac{C}{C-1}\gamma\Big) \sum_{u \in [C]} \pi_u \sum_{u^+ \in [C]} \pi_{u^+} B(u,u^+)
		\nonumber\\
		&+ \frac{C\gamma}{C-1}\Big(1-\frac{C}{C-1}\gamma\Big) \sum_{u^+ \in [C]} \pi_{u^+} \sum_{u \in [C]} \pi_{u} B(u, u^+)
		\nonumber\\
		&+ \frac{C^2\gamma^2}{(C-1)^2} \sum_{u \in [C]} \sum_{u^+ \in [C]} \pi_u \pi_{u^+} B(u,u^+)
		\nonumber\\
		&= \Big(1-\frac{C}{C-1}\gamma\Big)^2 \sum_{u \in [C]} \pi_u B(u,u) 
		\nonumber\\
		&+ \frac{C\gamma}{C-1}\Big(2-\frac{C}{C-1}\gamma\Big) \sum_{u,u^+ \in [C]} \pi_u \pi_{u^+} B(u,u^+)
	\end{align}
 }
	Note that 
        {
	\begin{align}
		&\sum_{u \in [C]} \pi_u B(u,u) 
		= \mathcal{R}(\mathcal{L}; f),
	\end{align}
        }
	and
        {
	\begin{align}
		&\sum_{u,u^+ \in [C]} \pi_u \pi_{u^+} B(u,u^+)
		= \Delta \mathcal{R}(\mathcal{L}; f).
	\end{align}
        }
	Thus we complete the proof.
\end{proof}



\textbf{Remark.} From Theorem \ref{thm::riskconsistency}, we show that the noisy risk is a linear combination of the clean risk $\mathcal{R}(\mathcal{L}; f)$ and the additional risk $\Delta \mathcal{R}(\mathcal{L}; f)$.
Specifically, when $\gamma=0$, i.e. the labels are clean, the RHS of \eqref{eq::riskconsistency} degenerates to $\mathcal{R}(\mathcal{L}; f)$.
Note that in the additional risk term $\Delta \mathcal{R}(\mathcal{L}; f)$, the anchor sample $x$ and positive samples $\{x_m^+\}_{m=1}^{M}$ are independently and uniformly sampled from the input distribution. 
The additional risk $\Delta \mathcal{R}(\mathcal{L}; f)$ represents the negative influence of label corruption, because no feature information can be learned by minimizing $\Delta \mathcal{R}(\mathcal{L}; f)$ and aligning such independent positive samples.

In noisy label learning, the goal is to optimize the clean risk $\mathcal{R}(f)$, whereas we can only achieve the noisy risk $\widetilde{\mathcal{R}}(f)$
since the clean distribution remains unknown. 
Nonetheless, if the additional risk $\Delta \mathcal{R}(\mathcal{L}; f)$ is a constant, then minimizing the noisy and clean risks results in the same optimal representation function $f$. 

\begin{theorem}\label{thm::noisetolerant}
	Assume that the input data is class balanced, and there exists a constant $A \in \mathbb{R}$ such that $\Delta \mathcal{R}(\mathcal{L}; f) = A$. Then under Assumption \ref{ass::symnoise}, contrastive loss $\mathcal{L}$ is noise tolerant if $\gamma < \frac{C-1}{C}$.
\end{theorem}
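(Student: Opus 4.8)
The plan is to specialize the exact risk identity of Theorem~\ref{thm::riskconsistency} to the case of constant additional risk and then exploit positivity of the resulting coefficient. Substituting $\Delta\mathcal{R}(\mathcal{L}; f) = A$ for every representation function $f$ into \eqref{eq::riskconsistency} makes the $f$-dependent part of the second term disappear, leaving
\begin{align*}
\widetilde{\mathcal{R}}(\mathcal{L}; f) = \Big(1 - \tfrac{C}{C-1}\gamma\Big)^2 \mathcal{R}(\mathcal{L}; f) + \tfrac{C}{C-1}\gamma\Big(2 - \tfrac{C}{C-1}\gamma\Big) A .
\end{align*}
Thus $\widetilde{\mathcal{R}}(\mathcal{L}; \cdot)$ is an affine function of $\mathcal{R}(\mathcal{L}; \cdot)$ whose slope $\alpha := \big(1 - \tfrac{C}{C-1}\gamma\big)^2$ and intercept do not depend on $f$.

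Next I would check that the slope is strictly positive in the stated range: when $\gamma < \frac{C-1}{C}$ we have $0 < \tfrac{C}{C-1}\gamma < 1$, hence $\alpha = \big(1 - \tfrac{C}{C-1}\gamma\big)^2 > 0$. (At the excluded value $\gamma = \frac{C-1}{C}$ the slope vanishes, the noisy risk degenerates to the constant $A$, and robustness becomes vacuous, which explains the hypothesis.) Given $\alpha > 0$, for any two representation functions $f_1, f_2$ one has $\widetilde{\mathcal{R}}(\mathcal{L}; f_1) - \widetilde{\mathcal{R}}(\mathcal{L}; f_2) = \alpha\big(\mathcal{R}(\mathcal{L}; f_1) - \mathcal{R}(\mathcal{L}; f_2)\big)$, so the orderings induced by the clean and noisy risks coincide. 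In particular a global minimizer $f^*$ of the clean risk satisfies $\widetilde{\mathcal{R}}(\mathcal{L}; f^*) \le \widetilde{\mathcal{R}}(\mathcal{L}; f)$ for all $f$, and conversely, so the two risks share the same set of minimizers; this is precisely the noise-tolerance property in the sense of \cite{ghosh2017robust}, transcribed to representation functions instead of classifiers.

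There is no substantial technical obstacle here, since the statement is essentially a corollary of Theorem~\ref{thm::riskconsistency}. The two points that warrant care are the sign analysis that isolates the condition $\gamma < \frac{C-1}{C}$, and phrasing noise tolerance at the level of the representation function $f$ rather than a downstream classifier, so that preservation of the $\operatorname{argmin}$ under this affine reparametrization is the correct formalization of robustness in the contrastive setting.
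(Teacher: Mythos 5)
Your proposal is correct and follows essentially the same route as the paper's proof: substitute the constant $A$ into the identity of Theorem~\ref{thm::riskconsistency}, observe that the noisy risk becomes an affine function of the clean risk with slope $\big(1-\tfrac{C}{C-1}\gamma\big)^2>0$ when $\gamma<\tfrac{C-1}{C}$, and conclude that the two risks share the same minimizers. Your additional remark on why the endpoint $\gamma=\tfrac{C-1}{C}$ must be excluded is a welcome touch of care that the paper's own proof glosses over.
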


\begin{proof}[Proof of Theorem \ref{thm::noisetolerant}]
	For symmetric label noise, by Theorem \ref{thm::riskconsistency} and that $\Delta \mathcal{R}(\mathcal{L}; f) = A$, we have
        {
	\begin{align*}
		\widetilde{\mathcal{R}}(\mathcal{L}; f) 
		&= \Big(1-\frac{C}{C-1}\gamma\Big)^2 \mathcal{R}(\mathcal{L}; f) 
		+ \frac{C\gamma}{C-1}\Big(2-\frac{C}{C-1}\gamma\Big) A.
	\end{align*}
        }
	Suppose $f^*$ is the minimizer of $\widetilde{\mathcal{R}}(\mathcal{L}; f)$, i.e. for all $f$
	\begin{align}
		\widetilde{\mathcal{R}}(\mathcal{L}; f^*) \leq \widetilde{\mathcal{R}}(\mathcal{L}; f).
	\end{align}
	Then if $\gamma \leq (C-1)/C$, we have
	\begin{align}
		\mathcal{R}(\mathcal{L}; f^*)
		\leq \mathcal{R}(\mathcal{L}; f),
	\end{align}
	that is, $f^*$ is also the minimizer of $\mathcal{R}(\mathcal{L}; f)$.
\end{proof}


\textbf{Remark.} The symmetrization result in Theorem \ref{thm::noisetolerant} may look similar to that in previous works for classification losses, but looking into the details, they are totally different. Firstly, the notion of "symmetric" in the robust condition is different. The symmetric condition in previous works, e.g. \cite{ghosh2017robust}, requires the expectation of $\mathcal{L}(g(x), c)$ w.r.t. all classes is a constant, i.e. $\mathcal{L}(g(x), c)$ is ``symmetric'' over all classes, whereas we require $\Delta\mathcal{R}(\mathcal{L};f)$* to be a constant, i.e. we require the contrastive loss $\mathcal{L}(\boldsymbol{x};f)$ to be ``symmetric'' over all positive samples. Since there is no such notion of positive pairs in the classification losses, the symmetric condition in previous works cannot be directly generalized to the case of contrastive losses. 
Secondly, the technical details of deriving the robust condition are different from those of previous works. 



\section{SymNCE: Provably Robust Contrastive Loss}\label{sec::SymNCE}

Theoretically inspired by Theorem \ref{thm::noisetolerant}, we propose a robust contrastive loss SymNCE by directly modifying the InfoNCE loss function. 
In Section \ref{sec::nonrobust}, we first prove that InfoNCE loss is non-robust to label noise. To meet the robust condition, we propose a reverse version of InfoNCE called RevNCE in Section \ref{sec::RevNCE}, and then design the specific form of SymNCE in Section \ref{sec::SymNCE} by adding RevNCE to InfoNCE. The empirical form of SymNCE is shown in Section \ref{sec::empSymNCE}. All proofs are shown in Appendix \ref{sec::proof1}.

\subsection{InfoNCE Loss Function is Non-Robust}\label{sec::nonrobust}

We first argue that the InfoNCE loss function is non-robust.
Specifically, we have the InfoNCE loss function with $M$ positive samples and $K$ negative samples defined as follows.
\begin{align}\label{eq::InfoNCE}
	\mathcal{L}_{\mathrm{InfoNCE}} (\boldsymbol{x}; f) 
	:= \frac{1}{M}\sum_{m \in [M]} \ell(x,x_m^+; f),
\end{align}
where 
\begin{align*}
	\ell(x,x_m^+; f) := -\log\frac{e^{f(x)^\top f(x_m^+)}}{e^{f(x)^\top f(x_m^+)} + \sum_{k \in [K]} e^{f(x)^\top f(x^-_k)}},
\end{align*}

For the convenience of calculation,  we consider the limit form of InfoNCE loss as
\begin{align}
	&\mathcal{R}^{\mathrm{lim}}(\mathcal{L}_{\mathrm{InfoNCE}}; f)
        \nonumber\\
	&= \lim_{M, K \to \infty} \mathcal{R}(\mathcal{L}_{\mathrm{InfoNCE}}; f) - \log K
	\nonumber\\
	&= - \mathbb{E}_{x,x^+ \overset{\text{i.i.d.}}{\sim} \mathrm{P}_X} f(x)^\top f(x^+) 
        \nonumber\\
	&\quad+ \mathbb{E}_{x \sim \mathrm{P}_X} \log \mathbb{E}_{x^- \sim \mathrm{P}_X} e^{f(x)^\top f(x^-)},
\end{align}
inspired by \cite{wang2020understanding}.

According to the definition of the additional risk, we have
{
\begin{align}\label{eq::InfoNCE_lim}
	&\Delta\mathcal{R}^{\mathrm{lim}}(\mathcal{L}_{\mathrm{InfoNCE}}; f)
        \nonumber\\
        &=\lim_{M, K \to \infty} \Delta \mathcal{R}(\mathcal{L}_{\mathrm{InfoNCE}}; f) - \log K 
	\nonumber\\
	&= - \mathbb{E}_{x,x^+ \overset{\text{i.i.d.}}{\sim} \mathrm{P}_X} f(x)^\top f(x^+) 
	\nonumber\\
	&\quad+ \mathbb{E}_{x \sim \mathrm{P}_X} \log \mathbb{E}_{x^- \sim \mathrm{P}_X} e^{f(x)^\top f(x^-)}.
\end{align}
}



Recall that in Theorem \ref{thm::noisetolerant}, the general robust condition requires $\Delta\mathcal{R}(\mathcal{L};f)$ to be a constant, whereas according to the above equation, we observe that the value of $\Delta\mathcal{R}^{\mathrm{lim}}(\mathcal{L}_{\mathrm{InfoNCE}};f)$ is a non-constant depending on the function form of $f$. 
In this case, optimizing the noisy risk $\widetilde{\mathcal{R}}(\mathcal{L}_{\mathrm{InfoNCE}};f)$ is not guaranteed to reach the same solution as the clean risk $\mathcal{R}(\mathcal{L}_{\mathrm{InfoNCE}};f)$, and therefore the InfoNCE loss is non-robust.

\subsection{Reverse InfoNCE (RevNCE)}\label{sec::RevNCE}

Now that we prove that the InfoNCE loss fail to meet the robust condition, we seek to add a reverse term to the InfoNCE loss such whose additional risk equals exactly to $-\Delta \mathcal{R}^{\mathrm{lim}}(\mathcal{L}_{\mathrm{InfoNCE}}; f)$ through all representation functions $f$, so as to guarantee the robustness of the total loss function. 
Although this theoretical robustness can be trivially achieved by adding $-\mathcal{L}_{\mathrm{InfoNCE}}$ or other similar terms to the original InfoNCE, this would destroy the learning procedure.
Therefore, in the meanwhile, this reverse term is also required to function similarly as InfoNCE, i.e. aligning the positive samples and pushing apart the negative ones.

To find a proper ``reverse'' loss that meets the above requirements, we utilize the symmetric property of the additional risk.
Specifically, note that in the definition of $\Delta \mathcal{R}^{\mathrm{lim}} (\mathcal{L}_{\mathrm{InfoNCE}}; f)$, the positive and negative samples are i.i.d.~generated. Therefore, we can exchange them to get a reverse form of $\Delta \mathcal{R}^{\mathrm{lim}} (\mathcal{L}_{\mathrm{InfoNCE}}; f)$ without changing its value, i.e.
	\begin{align}\label{eq::deltaRrev}
            &-\Delta \mathcal{R}^{\mathrm{lim}} (\mathcal{L}_{\mathrm{InfoNCE}}; f) + \log K 
		\nonumber\\
            &=\lim_{M, K \to \infty} -\Delta \mathcal{R} (\mathcal{L}_{\mathrm{InfoNCE}}; f) + \log K 
		\nonumber\\
		&= \mathbb{E}_{x,x^- \overset{\text{i.i.d.}}{\sim} \mathrm{P}_X} f(x)^\top f(x^-) - \mathbb{E}_{x \sim \mathrm{P}_X} \log \mathbb{E}_{x^+ \sim \mathrm{P}_X} e^{f(x)^\top f(x^+)}
		\nonumber\\
		&= \mathbb{E}_{x,x^- \overset{\text{i.i.d.}}{\sim} \mathrm{P}_X} -\log \frac{\mathbb{E}_{x^+ \sim \mathrm{P}_X} e^{f(x)^\top f(x^+)}}{f(x)^\top f(x^-)}.
	\end{align}

In Theorem \ref{prop::RevNCE}, we propose the reverse InfoNCE loss (abbreviated as RevNCE), and show that its limit form is exactly the same as \eqref{eq::deltaRrev}.
Morevoer, similar to InfoNCE, the RevNCE loss aligns the positive samples and meanwhile pushes away the negatives. It differs from the InfoNCE loss function only in the summation of the positives and negatives.

\begin{theorem}\label{prop::RevNCE}
	Define the RevNCE loss function as 
	{
		\begin{align}
			\mathcal{L}_{\mathrm{RevNCE}} (\boldsymbol{x}; f) 
			= \frac{1}{K}\sum_{k \in [K]} -\log \frac{\frac{1}{M}\sum_{m \in [M]} e^{f(x)^\top f(x_m^+)}}{e^{f(x)^\top f(x_k^-)}}.
		\end{align}
	}
	Then we have
 {
	 \begin{align*}
        \Delta \mathcal{R}^{\mathrm{lim}} (\mathcal{L}_{\mathrm{RevNCE}}; f) 
        = - \Delta \mathcal{R}^{\mathrm{lim}} (\mathcal{L}_{\mathrm{InfoNCE}}; f) + \log K,
	 \end{align*}
  where $\Delta \mathcal{R}^{\mathrm{lim}} (\mathcal{L}_{\mathrm{RevNCE}}; f) = \lim_{M, K \to \infty} \Delta \mathcal{R} (\mathcal{L}_{\mathrm{RevNCE}}; f)$
  and $\Delta \mathcal{R}^{\mathrm{lim}} (\mathcal{L}_{\mathrm{InfoNCE}}; f) = \lim_{M, K \to \infty} \Delta \mathcal{R}(\mathcal{L}_{\mathrm{InfoNCE}}; f)$.
  }
\end{theorem}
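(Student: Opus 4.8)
The plan is to evaluate $\Delta\mathcal{R}(\mathcal{L}_{\mathrm{RevNCE}};f)$ directly from the definition of the additional risk in Theorem~\ref{thm::riskconsistency}, pass to the limit $M,K\to\infty$, and recognize the result as the last line of \eqref{eq::deltaRrev}. First I would simplify the RevNCE loss using $-\log(a/e^{b})=-\log a+b$, which gives
\begin{align*}
\mathcal{L}_{\mathrm{RevNCE}}(\boldsymbol{x};f)
&= -\log\Big(\tfrac{1}{M}\sum_{m\in[M]} e^{f(x)^\top f(x_m^+)}\Big)
+ \tfrac{1}{K}\sum_{k\in[K]} f(x)^\top f(x_k^-).
\end{align*}
Note that in $\Delta\mathcal{R}$ the classes $c^+,\{c_m^+\}_{m\in[M]},\{c_k^-\}_{k\in[K]}$ are drawn i.i.d.\ from $\boldsymbol{\pi}$ and the samples from the corresponding class-conditionals, so the anchor $x$, the positives $\{x_m^+\}$ and the negatives $\{x_k^-\}$ are mutually independent, each distributed as the marginal $\mathrm{P}_X$. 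Taking the expectation and using that the $x_k^-$ are i.i.d.\ copies, the negative sum collapses into a single expectation, yielding
\begin{align*}
\Delta\mathcal{R}(\mathcal{L}_{\mathrm{RevNCE}};f)
&= -\,\mathbb{E}_{x,\{x_m^+\}}\log\Big(\tfrac{1}{M}\sum_{m\in[M]} e^{f(x)^\top f(x_m^+)}\Big)
+ \mathbb{E}_{x,x^-\overset{\text{i.i.d.}}{\sim}\mathrm{P}_X} f(x)^\top f(x^-).
\end{align*}

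Next I would take $M,K\to\infty$. The second term is independent of $M$ and $K$, so only the first term needs work: conditioning on $x$, the strong law of large numbers applied to the i.i.d.\ sample $\{x_m^+\}\sim\mathrm{P}_X$ gives $\tfrac{1}{M}\sum_{m} e^{f(x)^\top f(x_m^+)}\to\mathbb{E}_{x^+\sim\mathrm{P}_X} e^{f(x)^\top f(x^+)}$ almost surely, and continuity of $\log$ transfers this to the integrand. Under the standard unit-sphere normalization $\|f\|\le 1$ the random variable $\log\big(\tfrac{1}{M}\sum_{m} e^{f(x)^\top f(x_m^+)}\big)$ lies in $[-1,1]$, so bounded convergence permits exchanging limit and expectation and we obtain
\begin{align*}
\Delta\mathcal{R}^{\mathrm{lim}}(\mathcal{L}_{\mathrm{RevNCE}};f)
&= -\,\mathbb{E}_{x\sim\mathrm{P}_X}\log\mathbb{E}_{x^+\sim\mathrm{P}_X} e^{f(x)^\top f(x^+)}
+ \mathbb{E}_{x,x^-\overset{\text{i.i.d.}}{\sim}\mathrm{P}_X} f(x)^\top f(x^-).
\end{align*}
The right-hand side is precisely the expression on the last line of \eqref{eq::deltaRrev}, which by \eqref{eq::deltaRrev} equals $-\Delta\mathcal{R}^{\mathrm{lim}}(\mathcal{L}_{\mathrm{InfoNCE}};f)+\log K$; this is the claimed identity. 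Equivalently, one may compare termwise with \eqref{eq::InfoNCE_lim} after relabeling the dummy variables $x^+\leftrightarrow x^-$, both being i.i.d.\ copies of $\mathrm{P}_X$.

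The only substantive step is the interchange of limit and expectation in the middle display — turning $\mathbb{E}_x\log$ of a sample average into $\log\mathbb{E}$ — which I expect to be the main obstacle; it rests on the SLLN together with the boundedness of the feature map, and everything else is linearity of expectation and elementary algebra of logarithms. It is worth noting that the $K\to\infty$ limit is trivial for RevNCE, since its per-negative summands are i.i.d.\ with a common expectation; this is exactly why, in contrast to InfoNCE, no $\log K$ divergence is present and $\Delta\mathcal{R}^{\mathrm{lim}}(\mathcal{L}_{\mathrm{RevNCE}};f)$ is defined without a $\log K$ subtraction, while the additive $\log K$ in the identity is carried by the $-\log K$ normalization in $\Delta\mathcal{R}^{\mathrm{lim}}(\mathcal{L}_{\mathrm{InfoNCE}};f)$.
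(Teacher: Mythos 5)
Your proposal is correct and follows essentially the same route as the paper's proof: expand the logarithm, use independence of anchor, positives, and negatives under $\Delta\mathcal{R}$, pass to the limit so the sample average of $e^{f(x)^\top f(x_m^+)}$ becomes $\mathbb{E}_{x^+}e^{f(x)^\top f(x^+)}$, and match the result to the reversed form of $\Delta\mathcal{R}^{\mathrm{lim}}(\mathcal{L}_{\mathrm{InfoNCE}};f)$ in \eqref{eq::deltaRrev}. If anything, you are more careful than the paper, which performs the limit--expectation interchange implicitly; your SLLN-plus-bounded-convergence justification (using the boundedness of the normalized features) is exactly the right way to make that step rigorous.
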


\begin{proof}[Proof of Theorem \ref{prop::RevNCE}]
	\begin{align}
		&\lim_{M, K \to \infty} \Delta \mathcal{R} (\mathcal{L}_{\mathrm{RevNCE}}; f)
		\nonumber\\
		&= \lim_{K,M \to \infty} \mathbb{E}_{x, x_m^+, x_k^- \overset{\text{i.i.d.}}{\sim} \mathrm{P}_X} 
            \nonumber\\
            &\qquad\qquad\cdot\frac{1}{K}\sum_{k \in [K]} -\log \frac{\frac{1}{M}\sum_{m \in [M]} e^{f(x)^\top f(x_m^+)}}{e^{f(x)^\top f(x_k^-)}}
		\nonumber\\
		&= \frac{1}{K}\sum_{k \in [K]} f(x)^\top f(x_k^-) 
            \nonumber\\
		&- \mathbb{E}_{x, x_m^+, x_k^- \overset{\text{i.i.d.}}{\sim} \mathrm{P}_X} \lim_{K,M \to \infty} \frac{1}{K}\sum_{k \in [K]} \log \Big(\frac{1}{M}\sum_{m \in [M]} e^{f(x)^\top f(x_m^+)}\Big)
		\nonumber\\
		&= \mathbb{E}_{x, x^- \overset{\text{i.i.d.}}{\sim} \mathrm{P}_X} f(x)^\top f(x_k^-) 
            \nonumber\\
		&- \mathbb{E}_{x \sim \mathrm{P}_X} \log \Big(\mathbb{E}_{x^+ \sim \mathrm{P}_X}  e^{f(x)^\top f(x^+)}\Big)
		\nonumber\\
		&= \log K - \lim_{M, K \to \infty} \Delta \mathcal{R}(\mathcal{L}_{\mathrm{InfoNCE}}; f). 
	\end{align}
\end{proof}

\subsection{Symmetric InfoNCE (SymNCE)}\label{sec::alg}

By adding together the InfoNCE and RevNCE loss functions, we propose the symmetric InfoNCE (SymNCE) as 
\begin{align}\label{eq::SymNCE}
	\mathcal{L}_{\mathrm{SymNCE}}(\boldsymbol{x}; f) 
	:= \mathcal{L}_{\mathrm{InfoNCE}}(\boldsymbol{x}; f) + \mathcal{L}_{\mathrm{RevNCE}}(\boldsymbol{x}; f).
\end{align}
By Theorem \ref{prop::RevNCE}, we have 
\begin{align}
&\Delta\mathcal{R}^{\mathrm{lim}}(\mathcal{L}_{\mathrm{SymNCE}}; f) 
\nonumber\\
&=\lim_{M, K \to \infty} \Delta\mathcal{R}(\mathcal{L}_{\mathrm{SymNCE}}; f) 
\nonumber\\
&= \lim_{M, K \to \infty} \Delta\mathcal{R}(\mathcal{L}_{\mathrm{InfoNCE}}; f) 
+ \lim_{M, K \to \infty} \Delta\mathcal{R}(\mathcal{L}_{\mathrm{RevNCE}}; f) 
\nonumber\\
&= \Delta \mathcal{R}^{\mathrm{lim}} (\mathcal{L}_{\mathrm{InfoNCE}}; f)
+ \Delta \mathcal{R}^{\mathrm{lim}} (\mathcal{L}_{\mathrm{RevNCE}}; f) 
\nonumber\\
&= \log K.
\end{align} 
Then by Theorem \ref{thm::noisetolerant}, $\mathcal{L}_{\mathrm{SymNCE}}(\boldsymbol{x}; f)$ is noise tolerant.

Intuitively, $\mathcal{L}_{\mathrm{SymNCE}}$ is noise tolerant because $\mathcal{L}_{\mathrm{RevNCE}}$ selects high-confidence positive samples and increases the positive-negative margin.
Specifically, eecall that for the InfoNCE loss, because log-sum-exp approximates the max function, we have
\begin{align}
    &\mathcal{L}_{\mathrm{InfoNCE}} (\boldsymbol{x}; f)
    \nonumber\\
	 &= \frac{1}{M}\sum_{m \in [M]} 
	\log\bigg(1 + \sum_{k \in [K]} e^{f(x)^\top [f(x_k^-) - f(x_m^+)]}\bigg)
	\nonumber\\
	&\approx \frac{1}{M}\sum_{m \in [M]} \max_{k \in [K]} \big\{0, f(x)^\top [f(x_k^-) - f(x_m^+)]\big\},
\end{align}
that is, given a positive sample $x_m^+$, $\mathcal{L}_{\mathrm{InfoNCE}}$ pushes all negative samples away until they lie further than $x_m^+$, whereas at the same time, it evenly pulls close all positive samples.

On the contrary, for the RevNCE loss, we have
	\begin{align}
		&\mathcal{L}_{\mathrm{RevNCE}} (\boldsymbol{x}; f) - \log M
		\nonumber\\
	    &= \frac{1}{K} \sum_{k \in [K]} -\log\bigg(\sum_{m \in [M]}e^{f(x)^\top[f(x_m^+)-f(x_k^-)]}\bigg)
	\nonumber\\
		&\approx \frac{1}{K} \sum_{k \in [K]} -\max_{m \in [M]} \big\{f(x)^\top[f(x_m^+)-f(x_k^-)]\big\}\nonumber\\
	&= \frac{1}{K} \sum_{k \in [K]} \big\{f(x)^\top f(x_k^-) - \max_{m \in [M]} f(x)^\top f(x_m^+)\big\},
	\end{align}
that is, minimizing $\mathcal{L}_{\mathrm{RevNCE}} (\boldsymbol{x}; f)$ is to optimize over the positive sample sharing the highest similarity with the anchor. 
Note that samples with high semantic similarity usually have the same ground truth label. Then given a negative sample $x_k^-$, $\mathcal{L}_{\mathrm{RevNCE}}$ only pulls near the most confident same-class positive to the anchor point, so $\mathcal{L}_{\mathrm{SymNCE}}$ is robust against label noise because $\mathcal{L}_{\mathrm{RevNCE}}$ adds additional weights to the highly confident same-class positives.
On the other hand, $\mathcal{L}_{\mathrm{RevNCE}}$ intends to push the negative samples as far as possible from the anchor point, instead of just making it further than the positives. This increases the positive-negative margin ensuring a clear discrimination of the most confident same-class positive from the negatives, and consequently further improves robustness against label noise.

Furthermore, it is worthwhile mentioning that the high-level idea of our proposed robust counterpart of the InfoNCE loss is entirely different from previous works. It is worth noting that most previous robust losses have additive forms, i.e. they are summations of either independent positive-pair and negative-pair parts (e.g. RINCE) or independent loss terms over different classes (e.g. SCE). By contrast, the InfoNCE loss itself violates this additive form, so previous ideas of turning InfoNCE into a symmetric robust loss do not directly apply. Besides, it is computationally more burdensome to include $\Delta \mathcal{R}$ than the sum of classification losses over all classes $\sum_{i=1}^C \mathcal{L}(f(x),i)$ in the optimization procedure of loss. Therefore, it is also not a good idea to follow the idea of normalization (e.g. APL) to design a robust contrastive loss. Innovatively, in this paper, we design a robust contrastive loss based on the fact that contrastive losses such as InfoNCE could have multiple positive and negative pairs, which can not be realized by classification losses. 


\subsection{Empirical Form of SymNCE}\label{sec::empSymNCE}

As is argued in previous works, robustness itself is insufficient for empirical performance and robust losses can suffer from underfitting \cite{ma2020normalized}. Therefore, it is common to introduce an additional weight parameter $\beta \in [0,1]$ to balance between accuracy and robustness \cite{chuang2022robust, ma2020normalized, wang2019symmetric}. Empirically, denote $P(i) := \{p \in A(i): \tilde{y}_p = \tilde{y}_i\}$ as the index set of all same-class positives distinct from $i$, and $A(i)$ as the index set of all augmented samples. Then given the weight parameter $\beta$, and the temperature parameter $\tau > 0$, we define the empirical form of SymNCE loss as
\begin{align*}
	\widehat{\mathcal{L}}_{\mathrm{SymNCE}}(x_i; f, \beta) 
	:= \widehat{\mathcal{L}}_{\mathrm{InfoNCE}}(x_i; f) + \beta \cdot  \widehat{\mathcal{L}}_{\mathrm{RevNCE}}(x_i; f),
\end{align*}
where the empirical forms are
{
	\begin{align*}
		\widehat{\mathcal{L}}_{\mathrm{InfoNCE}}(x_i; f)
		:= \frac{1}{\vert P(i)\vert} \sum_{p \in P(i)} -\log \frac{e^{f(x_i)^\top f(x_p)/\tau}}{\sum_{a \in A(i)} e^{f(x_i)^\top f(x_a)/\tau}},
	\end{align*}
}
{
	\begin{align*}
		\widehat{\mathcal{L}}_{\mathrm{RevNCE}}(x_i; f)
		:= \frac{1}{\vert A(i) \vert -1} \sum_{a \in A(i)\setminus \{i\}} \hat{\ell}(x_i,x_p; f),
	\end{align*}
}
and
{
\begin{align*}
	\hat{\ell}(x_i,x_p; f) := -\log \frac{\frac{1}{\vert P(i)\vert} \sum_{p \in P(i)} e^{f(x_i)^\top f(x_p)/\tau}}{e^{f(x_i)^\top f(x_a)/\tau}}.
\end{align*}
}

\section{Robust Condition as An Inclusive Framework}\label{sec::discussion}

Aside from inspiring the robust SymNCE loss based on the InfoNCE loss,
in this part, we highlight that our theoretical analysis in Section \ref{sec::thoerem} in fact serves as an inclusive framework that applies to nearest-neighbor (NN) sample selection, a widely used robust contrastive learning technique, and the robust InfoNCE (RINCE) loss proposed in \cite{chuang2022robust}.

\subsection{NN Selection under the Unified Theoretical Framework}

In the method of NN selection, the positive samples in the supervised contrastive learning are selected as the same-class samples near to the anchor point in the embedding space. 
This technique is often used in noisy label learning algorithms to select confident samples that are believed to have correct annotated labels \cite{navaneet2022constrained, ortego2021multi, li2022selective}. These works discuss that samples usually have the same ground truth label as the semantically similar examples in a neighborhood, so it is reasonable to use NN techniques to select confident samples. 
In this part, we give an alternative theoretical explanation that NN techniques can reduce the gap between the additional risk $\Delta\mathcal{R}$ and constant values, thus making the loss function robust to label noise.


Under our theoretical framework, we take the widely used InfoNCE loss \eqref{eq::InfoNCE} as an example to demonstrate how the NN technique enables contrastive learning to be robust against label noise.

Recall that in Section \ref{sec::nonrobust}, we discuss that InfoNCE is non-robust because $\lim_{M, K \to \infty} \Delta \mathcal{R}(\mathcal{L}_{\mathrm{InfoNCE}}; f) - \log K =  - \mathbb{E}_{x,x^+ \overset{\text{i.i.d.}}{\sim} \mathrm{P}_X} f(x)^\top f(x^+) + \mathbb{E}_{x \sim \mathrm{P}_X} \log \mathbb{E}_{x^- \sim \mathrm{P}_X} e^{f(x)^\top f(x^-)}$ is a non-constant relying on the function form of $f$. By taking a closer look, according to Jensen's Inequality, we see that
$\mathbb{E}_{x'} f(x)^\top f(x') < \log \mathbb{E}_{x'} e^{f(x)^\top f(x')}$ for non-constant-valued $f$, and therefore we have $\lim_{M, K \to \infty} \Delta \mathcal{R}(\mathcal{L}_{\mathrm{InfoNCE}}; f) - \log K > 0$.
Here we discuss that the NN technique reduces the value of the $\mathbb{E}_{x'} f(x)^\top f(x')$, making $\lim_{M, K \to \infty} \Delta \mathcal{R}(\mathcal{L}_{\mathrm{InfoNCE}}; f) - \log K$ close to the constant $0$, and thus enabling the loss function to be noise tolerant according to Theorem \ref{thm::noisetolerant}.
Mathematically, we formulate the InfoNCE loss with the NN technique as 
\begin{align*}
	\mathcal{L}_{\mathrm{InfoNCE-NN}} (\boldsymbol{x}; f, t)
	&:= \frac{1}{\vert\mathcal{I}_N(x;t)\vert}\sum_{m \in \mathcal{I}_N(x;t)} \ell(x,x_m^+;f),
\end{align*}
where for a threshold parameter $t \in [-1,1]$, we let $\mathcal{N}(x;t) := \{x^+: f(x)^\top f(x^+) \geq t\}$ be the neighbor set of sample $x$, and denote $\mathcal{I}_N(x;t)$ as the index set of $\mathcal{N}(x;t)$.
The threshold parameter $t$ ensures that only the positive samples sharing high similarity with the anchor sample are used to calculate the InfoNCE loss. 
Note that when $t$ is selected as the quantiles of $f(x)^\top f(x')$, the positive sample set $\mathcal{N}(x;t)$ contains exactly the nearest neighbors of $x$ in the embedding space. 
Then for the corresponding additional risk w.r.t.~$\mathcal{L}_{\mathrm{InfoNCE-NN}}$, we have
{
\begin{align*}
	&\lim_{M, K \to \infty} \Delta \mathcal{R}(\mathcal{L}_{\mathrm{InfoNCE-NN}}; f, t) - \log K
	\nonumber\\
	&= - \mathbb{E}_{x \sim \mathrm{P}_X} \mathbb{E}_{x' \in \mathcal{N}(x;t)} f(x)^\top f(x') 
	\nonumber\\
	&\quad+ \mathbb{E}_{x \sim \mathrm{P}_X} \log \mathbb{E}_{x' \sim \mathrm{P}_X} e^{f(x)^\top f(x')}.
\end{align*}
}

Because $\mathbb{E}_{x' \in \mathcal{N}(x;t)} f(x)^\top f(x') \geq \mathbb{E}_{x' \sim \mathrm{P}_X} f(x)^\top f(x')$, for a given $f$, we can select a proper threshold parameter $t$ to make $\lim_{M, K \to \infty} \Delta \mathcal{R}(\mathcal{L}_{\mathrm{InfoNCE-NN}}; f, t) - \log K = 0$, and thus $\mathcal{L}_{\mathrm{InfoNCE-NN}}$ is noise tolerant.

\subsection{RINCE under the Unified Theoretical Framework}\label{sec::rince}

The mathematical form of the RINCE loss function \cite{chuang2022robust} is 
	\begin{align*}
		\mathcal{L}_{\mathrm{RINCE}}(\boldsymbol{x}; f) 
		&= -(1-\lambda) e^{f(x)^\top f(x^+)} + \lambda \sum_{k=1}^K e^{f(x)^\top f(x_k^-)},
	\end{align*}
where $\lambda > 0$ is a density weighting term controlling the ratio between positive and negative pairs.
Although inspired by the symmetric condition \cite{ghosh2017robust}, RINCE does not fit this theoretical framework designed for supervised losses that align the model prediction and label. 
By contrast, our proposed inclusive theoretical framework for contrastive losses can guarantee the robustness of RINCE against label noise from the risk consistency perspective.
By definition, we have the additional risk w.r.t.~$\mathcal{L}_{\mathrm{RINCE}}$ as
{
	\begin{align*}
		 &\Delta \mathcal{R}(\mathcal{L}_{\mathrm{RINCE}}; f) 
		 \nonumber\\
		 &= 
		\mathbb{E}_{x, x^+, \{x_k^-\}_{k=1}^K \overset{\text{i.i.d.}}{\sim} \mathrm{P}_X} \Big[-(1-\lambda) e^{f(x)^\top f(x^+)} 
		\nonumber\\
		&\qquad\qquad\qquad\qquad\qquad 
            + \lambda \sum_{k=1}^K e^{f(x)^\top f(x_k^-)}\Big] 
		\nonumber\\
		&= -(1-\lambda) \mathbb{E}_{x, x^+ \overset{\text{i.i.d.}}{\sim} \mathrm{P}_X} e^{f(x)^\top f(x^+)} 
            \nonumber\\
		&\quad
            + \lambda \sum_{k=1}^K \mathbb{E}_{x, x_k^- \overset{\text{i.i.d.}}{\sim} \mathrm{P}_X} e^{f(x)^\top f(x_k^-)} 
		 \nonumber\\
		&= \big((K+1)\lambda - 1\big) \mathbb{E}_{x, x' \overset{\text{i.i.d.}}{\sim} \mathrm{P}_X} e^{f(x)^\top f(x')}.
	\end{align*}
}
According to Theorem \ref{thm::noisetolerant}, when $\lambda = 1/(K+1)$, we have $\Delta \mathcal{R}(\mathcal{L}_{\mathrm{RINCE}}; f) = 0$, and thus $\mathcal{L}_{\mathrm{RINCE}}$ is noise tolerant.
That is, as a byproduct, our theory provides a more specific theoretical choice of the parameter $\lambda$ in RINCE.

\section{Validation Experiments}\label{sec::exp}

This paper primarily focuses on theoretical analysis, explaining how different samples in contrastive learning impact generalization. The experiments in this part are mainly designed to validate the theoretical insights and demonstrate that the proposed directions for improving performance are sound. 
In this section, we evaluate the empirical performance of our proposed SymNCE. 
In Sections \ref{sec::comparisons} and \ref{sec::realdata}, we show the effectiveness of our SymNCE by comparing with state-of-the-art robust loss functions on standard and real-world datasets, respectively. 
In Section \ref{sec::parameter}, we study the effect of the weight parameter $\beta$. 
In Section \ref{sec::verification}, we verify of our theoretical byproduct for the parameter choice in RINCE.

\subsection{Performance Comparisons on Benchmark Datasets}\label{sec::comparisons}

\begin{table*}[!h]
	\centering
	\caption{Performance comparisons with state-of-the-art robust losses on CIFAR datasets.}
		\begin{tabular}{c|c|ccccc|cc}
			\toprule
			\multirow{2}{*}{Dataset}  &
			\multirow{2}{*}{Noise rate} & \multicolumn{5}{c|}{Symmetric} & \multicolumn{2}{c}{Asymmetric} \\
			\cmidrule{3-9}
			& & 0\% & 20\% & 40\% & 60\% & 80\% & 20\% & 40\% \\ 
			\midrule
			\multirow{8}{*}{CIFAR-10} & CE & 92.88 & 85.22 & 78.9 & 71.98 & 41.38 & 86.88 & 82.12 \\
			& MAE & 91.32 & 89.28 & 84.85 & 78.19 & 41.46 & 81.3 & 56.77 \\
			& GCE & 91.83 & 89.22 & 84.66 & 76.66 & 42.21 & 88.01 & 81.05 \\
			& SCE & 92.97 & 89.48 & 83.57 & 77.6 & 55.58 & 89.08 & 82.46 \\
			& APL & 91.21 & 88.9 & 82.08 & 78.48 & 52.04 & 88.39 & 81.92 \\
			\cmidrule{2-9}
			& SupCon & 93.07 & 87.1 & 80.47 & 61.8 & 55.6 & 90.08 & 87.26 \\
			& RINCE & 86.17 & 85.26 & 83.15 & 80.65 & \textbf{80.32} & 84.92 & 84.27 \\
			& SymNCE (Ours) & \textbf{93.12} & \textbf{89.81} & \textbf{85.32} & \textbf{80.89} & 60.74 & \textbf{91.0} & \textbf{88.28} \\
			\midrule
			\multirow{8}{*}{CIFAR-100} & CE & 64.39 & 47.21 & 37.30 & 27.25 & 15.12 & 49.16 & 36.29 \\ 
			& MAE & 13.53 & 8.84 & 8.44 & 6.63 & 2.73 & 11.63 & 7.69 \\ 
			& GCE & 58.52 & 54.16 & 47.27 & 35.65 & 20.25 & 53.79 & 34.60 \\ 
			& SCE & 66.83 & 60.32 & 52.79 & 39.24 & 20.33 & 59.29 & 40.49 \\ 
			& APL & 34.22 & 28.38 & 25.27 & 16.95 & 10.68 & 28.98 & 21.70 \\ 
			\cmidrule{2-9}
			& SupCon & 68.05 & 61.23 & 53.02 & 38.74 & 25.37 & 64.98 & 55.33 \\ 
			& RINCE & 44.41 & 44.29 & 42.27 & 41.46 & 38.99 & 42.49 & 32.68 \\ 
			& SymNCE (Ours) & \textbf{70.30} & \textbf{64.56} & \textbf{55.68}& \textbf{43.35} & \textbf{39.14} & \textbf{67.15} & \textbf{56.5} \\ 
			\bottomrule
		\end{tabular}
	\label{tab::cifar}
\end{table*}

\begin{table*}[!h]
	\centering
	\caption{Real-data comparisons with SOTA robust losses.}
	\begin{tabular}{ccccc|ccc}
		\toprule
		CE & GCE & SCE & APL & MAE & SupCon & RINCE & SymNCE \\ 
		\midrule
		53.48 & 56.01 & 57.48 & 36.15 & 36.83 & 62.50 & 43.30 & \textbf{65.20} \\ 
		\bottomrule
	\end{tabular}
	\label{tab::real}
\end{table*}

\begin{figure*}[!h]
	\centering
	\subfloat[\footnotesize $\beta$ under symmetric noise.]{
		\includegraphics[width=0.31\linewidth, trim = 0 10 0 0]{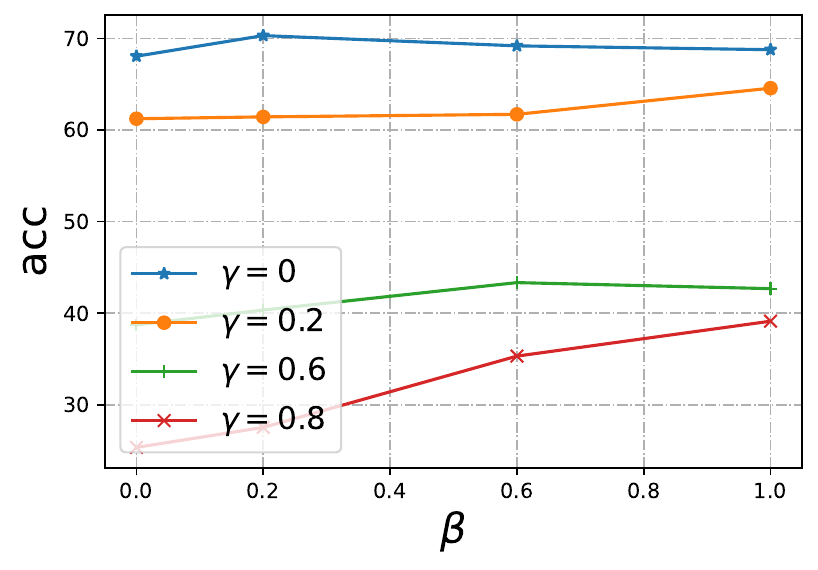}
		\label{fig::clnl_sym}
	}
	\subfloat[\footnotesize $\beta$ under asymmetric noise.]{
		\includegraphics[width=0.31\linewidth, trim = 0 10 0 0]{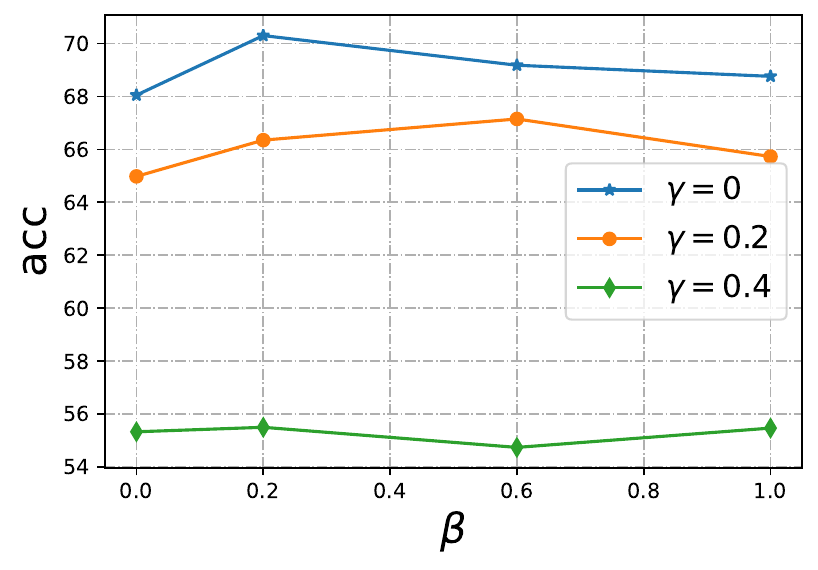}
		\label{fig::clnl_asym}
	}
        \subfloat[\footnotesize $\lambda$ in RINCE.]{
		\includegraphics[width=0.31\linewidth, trim = 10 0 40 0]{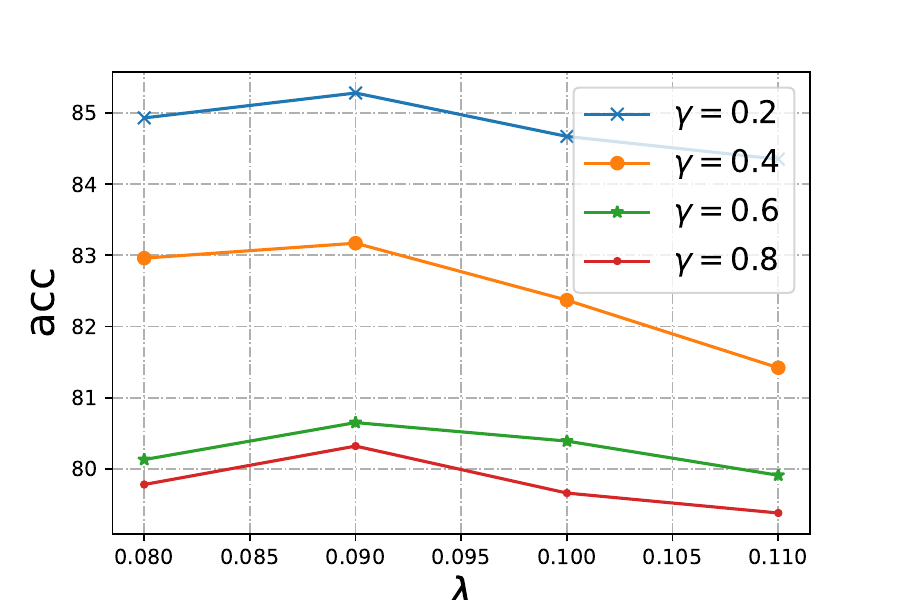}
		\label{fig::rince_10}
	}
	\caption{(a)(b) Parameter analysis of weight parameter $\beta$ in SymNCE under symmetric and asymmetric label noise. (c) Verification of theoretical choice of $\lambda$ in RINCE.}
	\label{fig::weight}
\end{figure*}

\begin{table}[!h]
	\centering
	\caption{Performance comparisons with state-of-the-art robust losses on TinyImagenet.}
		\begin{tabular}{c|ccccc}
			\toprule
                Noise rate
			& 0\% & 20\% & 40\% & 60\% & 80\% \\
			\midrule
			CE & 52.04 & 36.72 & 26.76 & 19.14 & 6.97 \\ 
			MAE & 27.04 & 12.60 & 3.97 & 2.12 & 1.25 \\
			GCE & 47.01 & 35.47 & 28.22 & 22.18 & 11.02 \\ 
			SCE & 48.13 & 39.05 & 33.62 & 16.95 & 10.81 \\ 
			APL & 32.36 & 20.28 & 6.97 & 2.38 & 2.08 \\ 
			\midrule
			SupCon & 56.24 & 48.91 & 44.05 & 40.27 & 28.19 \\ 
			RINCE & 33.80 & 32.82 & 30.67 & 27.99 & 26.95 \\ 
			SymNCE & \textbf{56.33} & \textbf{51.26} & \textbf{46.06}& \textbf{40.83} & \textbf{30.62} \\ 
			\bottomrule
		\end{tabular}
	\label{tab::imagenet}
\end{table}

We first conduct numerical comparisons on CIFAR-10, CIFAR-100, and TinyImagenet benchmark datasets \cite{krizhevsky2009learning}. The noisy labels are generated following standard approaches in previous works \cite{ma2020normalized, wang2019symmetric}. The symmetric label noise is generated by flipping a proportion of labels in each class uniformly at random to other classes. The proportion of flipped labels equals to the noise rate $\gamma$. For asymmetric noise, we flip the labels within a specific set of classes. 
For CIFAR-10, flipping TRUCK $\rightarrow$ AUTOMOBILE, BIRD $\rightarrow$ AIRPLANE, DEER $\rightarrow$ HORSE, and CAT $\leftrightarrow$ DOG.
For CIFAR-100, the 100 classes are grouped into 20 super-classes with each having 5 sub-classes. We then flip each class within the same super-class into the next in a circular fashion. 
We vary the noise rate $\gamma \in \{0.2, 0.4, 0.6, 0.8\}$ for symmetric label noise and $\gamma \in \{0.2, 0.4\}$ for the asymmetric case.

It is worth mentioning that our paper focuses on designing new robust loss functions with sound theoretical guarantees, which is one of the most important parts of learning from label noise. We believe it is not so fair to directly compare a robust loss function with such algorithms incorporating multiple heuristic techniques and without theoretical guarantees. Therefore, we compare our proposed SymNCE with both robust supervised classification losses and robust supervised contrastive losses. For supervised classification losses, we compare with Cross Entropy (CE), Mean Absolute Error (MAE) \cite{ghosh2017robust}, Generalized Cross Entropy (GCE) \cite{zhang2018generalized}, Symmetric Cross Entropy (SCE) \cite{wang2019symmetric}, and Active Passive Loss (APL).
For supervised contrastive losses, we compare with SupCon \cite{khosla2020supervised} and Robust InfoNCE (RINCE) \cite{chuang2022robust}.

We run all experiments on an NVIDIA GeForce RTX 3090 24G GPU. We adopt a ResNet-18 as the backbone for all methods, and use the SGD optimizer with momentum 0.9. The batch size is set to be 512. For our SymNCE, we set the learning rate as 0.1 for CIFAR-10, 0.01 for CIFAR100, and 0.01 for Tinyimagenet without decay. Temperature is 0.5 for CIFAR-10, 0.07 for CIFAR-100, and 0.1 for Tinyimagenet. Weight decay is set to be $10^-4$. The weight parameter $\beta$ is selected in \{0, 0.2, 0.6, 1.0\} through validation. For a robust start, we warm up with the SupCon loss in the early stage of training. For all supervised contrastive losses, we run 300 epochs for training the representations are then evaluated through linear probing on the noisy dataset with CE loss for 30 epochs. The data augmentations are random crop and resize (with random flip), color distortion, and color dropping. For supervised classification losses, we train all losses for 300 epochs and report the test accuracy. 

For compared methods, the parameters are set referring to their original papers. For SupCon, we set the learning rate as 0.5 for CIFAR-10, 0.01 for CIFAR-100, and Tinyimagenet without decay. Temperature is 0.1 for CIFAR-10, 0.07 for CIFAR-100, and 0.1 for Tinyimagenet. Weight decay is set to be $10^-4$. For RINCE, we set temperature 0.1, $\lambda=0.01$ and $q=0.5$. For RINCE, SCE, AP, CE, and MAE, the learning rate is set to be 0.01 without decay. We set $q=0.7$ in GCE, $\alpha =0.1$, $\beta=1$ for SCE. For APL (NCE+MAE), we set $\alpha=\beta=1$ for CIFAR-10 and $\alpha=10$, $\beta=0.1$ for CIFAR-100 and Tinyimagenet.

The results are shown in Tables \ref{tab::cifar} and \ref{tab::imagenet}. We observe that our SymNCE shows mostly better empirical performances under both symmetric and asymmetric label noise across various noise rates.
Compared with classification losses, SymNCE has significant advantages under high noise rates and asymmetric label noise. 
Compared with both classification and contrastive losses, SymNCE has larger performance gains on the more complex CIFAR-100 dataset.

\subsection{Performance Comparisons on Real-world Datasets}\label{sec::realdata}

Besides benchmark datasets, we also evaluate our proposed SymNCE on the real-world dataset Clothing1M \cite{xiao2015learning}, whose noise rate is estimated to be around 40\%. Following \cite{li2020dividemix}, we select a subset containing 1.4k samples for each class from the noisy training data and report the performance on the 10k test data. 
We adopt an ImageNet pre-trained ResNet-18 as our backbone. We use the SGD optimizer with momentum 0.9. The batch size is set to be 32. For our method, we set the learning rate as 0.002. The temperature is 0.07 and weight decay is set to be $10^{-4}$. For robust start, we warm up with the  SupCon loss for the early 20 epochs. For all supervised contrastive losses, we run 80 epochs for representation learning, which are then evaluated through linear probing on the training dataset with CE loss for 30 epochs. For the compared method, we all set the training epochs as 80 for a fair comparison. 

In Table \ref{tab::real}, where the best performance is marked in \textbf{bold}, we show that our SymNCE outperforms both robust classification losses and robust contrastive losses on the real-world dataset Clothing1M.

\subsection{Parameter Analysis}\label{sec::parameter}

Recall that in Section \ref{sec::alg}, we introduce a weight parameter $\beta$ to our $\mathcal{L}_{\mathrm{SymNCE}}$, which is the only parameter of our method and balances between the accurate term $\mathcal{L}_{\mathrm{InfoNCE}}$ and the robust term $\mathcal{L}_{\mathrm{RevNCE}}$. 
We here conduct analysis of the weight parameter $\beta$.
In Figures \ref{fig::clnl_sym} and \ref{fig::clnl_asym}, we plot the test accuracy of SymNCE with different weight parameters $\beta \in \{0.2, 0.6, 1.0\}$ under symmetric label noise $\gamma \in \{0, 0.2, 0.4, 0.6, 0.8\}$ and asymmetric label noise $\gamma \in \{0, 0.2, 0.4\}$ on CIFAR-100, where $\gamma=0$ corresponds to the clean dataset without label noise. 
We show that the optimal $\beta$ increases as noise rate $\gamma$ enhances. 
Specifically, for symmetric label noise, when noise rate is low ($\gamma = 0$), the optimal $\beta=0.2$, and when noise rate is high ($\gamma \in \{0.2, 0.6, 0.8\}$), the optimal $\beta=1.0$. For asymmetric label noise, the optimal $\beta$ is $0.2$ when $\gamma=0$, whereas raises to $1.0$ when $\gamma=0.4$.
This is because robust loss functions are designed to avoid overfitting to label noise, and thus can suffer from underfitting \cite{ma2020normalized} when noise rate is low. Therefore, we require relatively low $\beta$ and focus more on the accurate term $\mathcal{L}_{\mathrm{InfoNCE}}$ when noise rate is low. On the contrary, when noise rate is high, we require $\beta=1$, which is theoretically proved to be robust in Section \ref{sec::alg}.


\subsection{Verification of Our Theoretical Byproduct for RINCE}\label{sec::verification}

Recall that in Section \ref{sec::rince}, when unifying RINCE into our theoretical framework, we could byproduct provide a theoretical optimal parameter $\lambda=1/(K+1)$ for RINCE. Here we conduct experiments to verify the claim. We run experiments with ResNet-18 on the CIFAR-10 dataset under symmetric label noise $\lambda \in \{20\%, 40\%, 60\%, 80\%\}$. We vary the parameter $\lambda=\{0.08,0.09,0.10, 0.11\}$, and illustrate the linear probing accuracy in Figure \ref{fig::rince_10}. We show that RINCE with $\lambda=0.09$ has the best accuracy, which coincides with our theoretical choice $\lambda=1/(K+1)=1/11\approx 0.09$.



\section{Conclusion}

In this paper, we proposed a unified theoretical framework for robust supervised contrastive losses against label noise. We derived a general robust condition for arbitrary contrastive losses, which further inspires us to propose the SymNCE loss, a direct robust counterpart of the widely used InfoNCE loss. As a theoretical work, our results are naturally limited by the assumptions. Nonetheless, we highlight that our theoretical analysis is a unified framework applying to multiple robust techniques such as nearest neighbor sample selection and robust losses. This framework not only provides a benchmark for assessing the resilience of contrastive losses but also holds promise as a springboard for innovative loss function designs in the future.




\bibliography{NLCL}
\bibliographystyle{IEEEtran}

\clearpage
\appendix
In this appendix, we present the proofs related to Sections \ref{sec::robust_theory} and \ref{sec::SymNCE} in Appendices \ref{sec::proof1} and \ref{sec::proof2} respectively. The additional theoretical results and proofs under asymmetric label noise are in Appendix \ref{sec::asym}.

\section{Appendices}
\subsection{Proofs Related to Section \ref{sec::robust_theory}}\label{sec::proof1}

\begin{proof}[Proof Lemma \ref{lem::decom}]
	By definition of the noisy risk in \eqref{eq::noisyrisk}, there holds
 {
	\begin{align}\label{eq::tildeR1}
		&\widetilde{\mathcal{R}}(\mathcal{L}; f) 
            \nonumber\\
		&= \mathbb{E}_{\tilde{c}^+, \{\tilde{c}_k^-\}_{k=1}^K \sim \tilde{\boldsymbol{\pi}}^{K+1}} \mathbb{E}_{\substack{x, x^+ \sim \tilde{\rho}_{\tilde{c}^+}\\
				x^-_k \sim \tilde{\rho}_{\tilde{c}_k^-}, k \in [K]}} 
		\mathcal{L}(x, x^+, \{x_k^-\}_{k=1}^K; f)
		\nonumber\\
		&= \sum_{i, j_1, \ldots, j_K \in [C]} \tilde{\pi}_i \tilde{\pi}_{j_1} \cdots \tilde{\pi}_{j_K} 
            \nonumber\\
            &\qquad\cdot
		\mathbb{E}_{\substack{x, x^+ \sim \tilde{\rho}_{i}\\
				x^-_k \sim \tilde{\rho}_{j_k}, k \in [K]}} 
		\mathcal{L}(x, x^+, \{x_k^-\}_{k=1}^K; f)
		\nonumber \\
		&= \sum_{i, j_1, \ldots, j_K \in [C]} \sum_{m \in [C]} q_m(i) \pi_m 
            \nonumber\\
            &\qquad\cdot\sum_{l_1 \in [C]} q_{l_1}(j_1) \pi_{l_1} \cdots \sum_{l_K \in [C]} q_{l_K}(j_K) \pi_{l_K}
		\nonumber\\
		&\qquad\cdot
		\mathbb{E}_{\substack{x, x^+ \sim \tilde{\rho}_{i}\\
				x^-_k \sim \tilde{\rho}_{j_k}, k \in [K]}} 
		\mathcal{L}(x, x^+, \{x_k^-\}_{k=1}^K; f).
	\end{align}
 }
	Denote $a(i) := \sum_{u\in[C]} q_u(i) \pi_u$.
	Then by Lemma \ref{lem::pi}, we have
 {
	\begin{align}\label{eq::tildeR2}
		&\mathbb{E}_{\substack{x, x^+ \sim \tilde{\rho}_{i}\\
				x^-_k \sim \tilde{\rho}_{j_k}, k \in [K]}} 
		\mathcal{L}(x, x^+, \{x_k^-\}_{k=1}^K; f)
		\nonumber\\
		&= \idotsint \tilde{\rho}_i(x) \tilde{\rho}_i(x^+) \tilde{\rho}_{j_1}(x_1^-) \cdots \tilde{\rho}_{j_K}(x_K^-) 
            \nonumber\\
            &\ \cdot \mathcal{L}(x, x^+, \{x_k^-\}_{k=1}^K; f) \, dx \, dx^+ \, dx^-_1 \ldots dx^-_K
		\nonumber\\
		&= \idotsint \frac{\sum_{u\in[C]} q_u(i) \rho_u(x) \pi_u}{\sum_{u\in[C]} q_u(i) \pi_u} \cdot \frac{\sum_{u^+\in[C]} q_{u^+}(i) \rho_{u^+}(x^+) \pi_{u^+}}{\sum_{u^+\in[C]} q_{u^+}(i) \pi_{u^+}} 
            \nonumber\\
            &\ \cdot \frac{\sum_{v_1\in[C]} q_{v_1}(j_1) \rho_{v_1}(x_1^-) \pi_{v_1}}{\sum_{v_1\in[C]} q_{v_1}(j_1) \pi_{v_1}} 
		\cdots \frac{\sum_{v_K\in[C]} q_{v_K}(j_K) \rho_{v_K}(x_K^-) \pi_{v_K}}{\sum_{v_K\in[C]} q_{v_K}(j_K) \pi_{v_K}} 
            \nonumber\\
		&\ \cdot \mathcal{L}(x, x^+, \{x_k^-\}_{k=1}^K; f) \, dx \, dx^+ \, dx^-_1 \ldots dx^-_K
		\nonumber\\
		&:= (a(i)^2 a(j_1) \ldots a(j_K))^{-1} 
            \nonumber\\
            &
            \cdot q_u(i) q_{u^+}(i) q_{v_1}(j_1) \cdots q_{v_K}(j_K)
		\nonumber\\
		&\ \cdot \idotsint \rho_u(x) \rho_{u^+}(x^+) \rho_{v_1}(x_1^-) \cdots \rho_{v_K}(x_K^-) 
            \nonumber\\
            &\ \cdot \mathcal{L}(x, x^+, \{x_k^-\}_{k=1}^K; f) \, dx \, dx^+ \, dx^-_1 \ldots dx^-_K
		\nonumber\\
		&:= (a(i)^2 a(j_1) \ldots a(j_K))^{-1} 
            \nonumber\\
            & \hspace{-3mm} \sum_{u, u^+, v_1, \ldots, v_K \in[C]} \pi_u \pi_{u^+} \pi_{v_1} \cdots \pi_{v_K} \cdot q_u(i) q_{u^+}(i) q_{v_1}(j_1) \cdots q_{v_K}(j_K)
		\nonumber\\
		&\ \cdot \mathbb{E}_{\substack{x \sim \rho_u, x^+ \sim \rho_{u^+}\\
				x^-_k \sim \rho_{v_k}, k \in [K]}} 
		\mathcal{L}(x, x^+, \{x_k^-\}_{k=1}^K; f)
		\nonumber\\
		&:= (a(i)^2 a(j_1) \ldots a(j_K))^{-1} 
		\nonumber\\
		&\cdot \sum_{u, u^+, v_1, \ldots, v_K \in[C]} \pi_u \pi_{u^+} \pi_{v_1} \cdots \pi_{v_K} 
		\nonumber\\ 	
            &\ \cdot q_u(i) q_{u^+}(i) q_{v_1}(j_1) \cdots q_{v_K}(j_K) A(u,u^+,\boldsymbol{v}),
	\end{align}
    where we denote 
    \begin{align}
        A(u,u^+,\boldsymbol{v}) := \mathbb{E}_{\substack{x \sim \rho_u, x^+ \sim \rho_{u^+}\\
				x^-_k \sim \rho_{v_k}, k \in [K]}} 
		\mathcal{L}(x, x^+, \{x_k^-\}_{k=1}^K; f).
    \end{align}
 }
	Combining \eqref{eq::tildeR1} and \eqref{eq::tildeR2}, we have
 {
	\begin{align}
		&\widetilde{\mathcal{R}}(\mathcal{L}; f) 
            \nonumber\\
		&= \sum_{i, j_1, \ldots, j_K \in [C]} (a(i)^2 a(j_1) \ldots a(j_K))^{-1} 
		\nonumber\\
		&\qquad\cdot \sum_{m, l_1, \ldots, l_K \in [C]} q_m(i) q_{l_1}(j_1) \cdots q_{l_K}(j_K) \cdot \pi_m \pi_{l_1} \cdots \pi_{l_K} 
		\nonumber\\
		&\qquad\cdot \sum_{u, u^+, v_1, \ldots, v_K \in[C]} q_u(i) q_{u^+}(i) q_{v_1}(j_1) \cdots q_{v_K}(j_K) 
		\nonumber\\ 
		&\qquad\cdot \pi_u \pi_{u^+} \pi_{v_1} \cdots \pi_{v_K} A(u,u^+,\boldsymbol{v})
		\nonumber\\
		&= \sum_{i \in [C]}
		a(i)^{-2} \sum_{m \in [C]} \sum_{u\in[C]} \sum_{u^+\in[C]} \pi_m \pi_u \pi_{u^+} q_m(i) q_u(i) q_{u^+}(i) 
		\nonumber\\
		&\qquad\cdot \sum_{j_1 \in [C]} a(j_1)^{-1} \sum_{l_1 \in [C]} \sum_{v_1\in[C]} \pi_{l_1} \pi_{v_1} q_{l_1}(j_1) q_{v_1}(j_1) \cdots 
		\nonumber\\
		&\qquad\cdot \sum_{j_K \in [C]}
		a(j_K)^{-1} \sum_{l_K \in [C]} \sum_{v_K\in[C]} \pi_{l_K} \pi_{v_K} q_{l_K}(j_K) q_{v_K}(j_K)  
            \nonumber\\
		&\qquad\cdot A(u,u^+,\boldsymbol{v}).
		\label{eq::Anegs}
	\end{align}
 }
	For the positive term in \eqref{eq::Anegs}, we have
        {
	\begin{align}\label{eq::Apos}
		&\sum_{i \in [C]}
		a(i)^{-2} \sum_{m \in [C]} \sum_{u\in[C]} \sum_{u^+\in[C]} \pi_m \pi_u \pi_{u^+} q_m(i) q_u(i) q_{u^+}(i) 
            \nonumber\\
            &\qquad\qquad\qquad\qquad\qquad\qquad \cdot A(u,u^+,\boldsymbol{v})
		\nonumber\\
		&= \sum_{i \in [C]}
		a(i)^{-2} \sum_{u\in[C]} \sum_{u^+\in[C]} \pi_u \pi_{u^+} q_u(i) q_{u^+}(i) A(u,u^+,\boldsymbol{v}) a(i)
		\nonumber\\
		&= \sum_{i \in [C]}
		a(i)^{-1} \sum_{u\in[C]} \sum_{u^+\in[C]} \pi_u \pi_{u^+} q_u(i) q_{u^+}(i) A(u,u^+,\boldsymbol{v}).
	\end{align}
        }
	For the negative terms in \eqref{eq::Anegs}, we have for $k \in [K]$
        {
	\begin{align}\label{eq::Anegk}
		&\sum_{j_k \in [C]} a(j_k)^{-1} \sum_{l_k \in [C]} 	\sum_{v_k\in[C]} \pi_{l_k} \pi_{v_k} q_{l_k}(j_k) q_{v_k}(j_k) A(u,u^+,\boldsymbol{v})
		\nonumber\\
		&= \sum_{j_k \in [C]} a(j_k)^{-1} \sum_{l_k \in [C]} 	q_{l_k}(j_k) \pi_{l_k} \sum_{v_k\in[C]} \pi_{v_k}  q_{v_k}(j_k) A(u,u^+,\boldsymbol{v})
		\nonumber\\
		&= \sum_{j_k \in [C]} \sum_{v_k\in[C]} \pi_{v_k}  	q_{v_k}(j_k) A(u,u^+,\boldsymbol{v})
		\nonumber\\
		&= \sum_{v_k\in[C]} \pi_{v_k} A(u,u^+,\boldsymbol{v}) 	\sum_{j_k \in [C]} q_{v_k}(j_k) 
		\nonumber\\
		&= \sum_{v_k\in[C]} \pi_{v_k} A(u,u^+,\boldsymbol{v}).
	\end{align}
 }
	Then combining \eqref{eq::Anegs}, \eqref{eq::Apos}, and \eqref{eq::Anegs}, we have
        {
	\begin{align}
		&\widetilde{\mathcal{R}}(\mathcal{L}; f) 
		\nonumber\\
            &= \sum_{i \in [C]} 
		a(i)^{-1} \sum_{u\in[C]} \sum_{u^+\in[C]} \pi_u \pi_{u^+} q_u(i) q_{u^+}(i) 
		\nonumber\\
		&\qquad\cdot\sum_{v_1, \ldots, v_K \in[C]} \pi_{v_1} \cdots \pi_{v_K} A(u,u^+,\boldsymbol{v})
		\nonumber\\
		&=\sum_{i \in [C]} 
		a(i)^{-1} \sum_{u\in[C]} \sum_{u^+\in[C]} \pi_u \pi_{u^+} 	q_u(i) q_{u^+}(i) 
		\nonumber\\
		&\qquad\cdot\mathbb{E}_{\substack{x \sim \rho_u,\\x^+ \sim \rho_{u^+}}}
		\mathbb{E}_{\{x_k^-\}_{k=1}^K \overset{\text{i.i.d.}}{\sim} \mathrm{P}_X} \mathcal{L}(\boldsymbol{x}; f).
	\end{align}
	}
\end{proof}

\begin{proof}[Proof of Theorem \ref{thm::riskconsistency}]
	Under Assumption \ref{ass::symnoise}, $q_i(i)=1-\gamma$ and $q_u(i)=\gamma/(C-1)$ for $u \neq i$. Denote $B(u,u^+) := \sum_{v_1, \ldots, v_K \in[C]} \pi_{v_1} \cdots \pi_{v_K} A(u,u^+,\boldsymbol{v})$. By Lemma \ref{lem::decom}, we have
        {
	\begin{align}
		&\widetilde{\mathcal{R}}(\mathcal{L}; f) 
		\nonumber\\
		&= \sum_{i \in [C]} 
		a(i)^{-1} \sum_{u\in[C]} \sum_{u^+\in[C]} \pi_u \pi_{u^+} q_u(i) q_{u^+}(i) B(u,u^+)
		\nonumber\\
		&= \sum_{i \in [C]} a(i)^{-1} (1-\gamma)^2 \pi_i^2 B(i,i)
		\nonumber\\
		&+ \sum_{i \in [C]} a(i)^{-1} \frac{\gamma(1-\gamma)}{C-1} \pi_i \sum_{u^+ \neq i} \pi_{u^+} B(i,u^+)
		\nonumber\\
		&+ \sum_{i \in [C]} a(i)^{-1} \frac{\gamma(1-\gamma)}{C-1} \pi_i \sum_{u \neq i} \pi_{u} B(u,i)
		\nonumber\\
		&+ \sum_{i \in [C]} a(i)^{-1} \frac{\gamma^2}{(C-1)^2} \sum_{u \neq i} \sum_{u^+ \neq i} \pi_u \pi_{u^+} B(u,u^+)
		\nonumber\\
		&= \sum_{i \in [C]} a(i)^{-1} \Big((1-\gamma)^2 - \frac{\gamma^2}{(C-1)^2}\Big) \pi_i^2 B(i,i) 
		\nonumber\\
		&+ \sum_{i \in [C]} a(i)^{-1} \Big(\frac{\gamma(1-\gamma)}{C-1} - \frac{\gamma^2}{(C-1)^2}\Big)\pi_i \sum_{u^+ \neq i} \pi_{u^+} B(i,u^+)
		\nonumber\\
		&+ \sum_{i \in [C]} a(i)^{-1} \Big(\frac{\gamma(1-\gamma)}{C-1} - \frac{\gamma^2}{(C-1)^2}\Big)\pi_i \sum_{u \neq i} \pi_{u} B(u, i)
		\nonumber\\
		&+ \sum_{i \in [C]} a(i)^{-1} \frac{\gamma^2}{(C-1)^2} \sum_{u \in [C]} \sum_{u^+ \in [C]} \pi_u \pi_{u^+} B(u,u^+)
		\nonumber\\
		&= \sum_{u \in [C]} a(u)^{-1} \Big(1-\frac{C}{C-1}\gamma\Big)^2 \pi_u^2 B(u,u) 
		\nonumber\\
		&+ \sum_{u \in [C]} a(u)^{-1} \frac{\gamma}{C-1}\Big(1-\frac{C}{C-1}\gamma\Big)\pi_u \sum_{u^+ \in [C]} \pi_{u^+} B(u,u^+)
		\nonumber\\
		&+ \sum_{u^+ \in [C]} a(u^+)^{-1} \frac{\gamma}{C-1}\Big(1-\frac{C}{C-1}\gamma\Big)\pi_{u^+} \sum_{u \in [C]} \pi_{u} B(u, u^+)
		\nonumber\\
		&+ \sum_{i \in [C]} a(i)^{-1} \frac{\gamma^2}{(C-1)^2} \sum_{u \in [C]} \sum_{u^+ \in [C]} \pi_u \pi_{u^+} B(u,u^+).
	\end{align}
 }
	When the input data is class balanced, i.e. $\pi_i=\frac{1}{C}$ for $i \in [C]$, we have for $i \in [C]$,
		$a(i) = \sum_{u\in[C]}q_u(i)\pi_u = \frac{1}{C} \sum_{u\in[C]}q_u(i) = \frac{1}{C}$.
	Then we have
        {
	\begin{align}
		&\widetilde{\mathcal{R}}(\mathcal{L}; f) 
		\nonumber\\
		&= \Big(1-\frac{C}{C-1}\gamma\Big)^2 \sum_{u \in [C]} \pi_u B(u,u) 
		\nonumber\\
		&+ \frac{C\gamma}{C-1}\Big(1-\frac{C}{C-1}\gamma\Big) \sum_{u \in [C]} \pi_u \sum_{u^+ \in [C]} \pi_{u^+} B(u,u^+)
		\nonumber\\
		&+ \frac{C\gamma}{C-1}\Big(1-\frac{C}{C-1}\gamma\Big) \sum_{u^+ \in [C]} \pi_{u^+} \sum_{u \in [C]} \pi_{u} B(u, u^+)
		\nonumber\\
		&+ \frac{C^2\gamma^2}{(C-1)^2} \sum_{u \in [C]} \sum_{u^+ \in [C]} \pi_u \pi_{u^+} B(u,u^+)
		\nonumber\\
		&= \Big(1-\frac{C}{C-1}\gamma\Big)^2 \sum_{u \in [C]} \pi_u B(u,u) 
		\nonumber\\
		&+ \frac{C\gamma}{C-1}\Big(2-\frac{C}{C-1}\gamma\Big) \sum_{u,u^+ \in [C]} \pi_u \pi_{u^+} B(u,u^+)
	\end{align}
 }
	Note that 
        {
	\begin{align}
		&\sum_{u \in [C]} \pi_u B(u,u) 
            \nonumber\\
		&= \sum_{u \in [C]} \pi_u \sum_{v_1, \ldots, v_K \in[C]} \pi_{v_1} \cdots \pi_{v_K} 
        \nonumber\\
        &\qquad\cdot\mathbb{E}_{\substack{x \sim \rho_u, x^+ \sim \rho_{u}\\
				x^-_k \sim \rho_{v_k}, k \in [K]}} 
		\mathcal{L}
  (x, x^+, \{x_k^-\}_{k=1}^K; f)
		\nonumber\\
		&= \mathbb{E}_{c^+, \{c_k^-\}_{k=1}^K \sim \boldsymbol{\pi}^{K+1}} \mathbb{E}_{\substack{x \sim \rho_u, x^+ \sim \rho_{u}\\
				x^-_k \sim \rho_{v_k}, k \in [K]}} 
		\mathcal{L}
  (x, x^+, \{x_k^-\}_{k=1}^K; f)
		\nonumber\\
		&= \mathcal{R}(\mathcal{L}; f),
	\end{align}
        }
	and that 
        {
	\begin{align}
		&\sum_{u,u^+ \in [C]} \pi_u \pi_{u^+} B(u,u^+)
		\nonumber\\
		&= \sum_{u,u^+ \in [C]} \pi_u \pi_{u^+} \sum_{v_1, \ldots, v_K \in[C]} \pi_{v_1} \cdots \pi_{v_K} 
            \nonumber\\
            &\qquad\cdot\mathbb{E}_{\substack{x \sim \rho_u, x^+ \sim \rho_{u^+}\\
				x^-_k \sim \rho_{v_k}, k \in [K]}} 
		\mathcal{L}
  (x, x^+, \{x_k^-\}_{k=1}^K; f)
		\nonumber\\
		&= \mathbb{E}_{c,\ c^+, \{c_k^-\}_{k=1}^K \sim \boldsymbol{\pi}^{K+2}} \mathbb{E}_{\substack{x \sim \rho_u, x^+ \sim \rho_{u^+}\\
				x^-_k \sim \rho_{v_k}, k \in [K]}} 
		\mathcal{L}
  (x, x^+, \{x_k^-\}_{k=1}^K; f)
		\nonumber\\
		&:= \Delta \mathcal{R}(\mathcal{L}; f).
	\end{align}
        }
	Thus we have
        {
	\begin{align*}
		\widetilde{\mathcal{R}}(\mathcal{L}; f) 
		&= \Big(1-\frac{C}{C-1}\gamma\Big)^2 \mathcal{R}(\mathcal{L}; f) 
		\nonumber\\
		&+ \frac{C\gamma}{C-1}\Big(2-\frac{C}{C-1}\gamma\Big) \Delta \mathcal{R}(\mathcal{L}; f).
	\end{align*}
        }
\end{proof}

\begin{proof}[Proof of Theorem \ref{thm::noisetolerant}]
	For symmetric label noise, by Theorem \ref{thm::riskconsistency} and that $\Delta \mathcal{R}(\mathcal{L}; f) = A$, we have
        {
	\begin{align*}
		\widetilde{\mathcal{R}}(\mathcal{L}; f) 
		&= \Big(1-\frac{C}{C-1}\gamma\Big)^2 \mathcal{R}(\mathcal{L}; f) 
		+ \frac{C\gamma}{C-1}\Big(2-\frac{C}{C-1}\gamma\Big) A.
	\end{align*}
        }
	Suppose $f^*$ is the minimizer of $\widetilde{\mathcal{R}}(\mathcal{L}; f)$, i.e. for all $f$
	\begin{align}
		\widetilde{\mathcal{R}}(\mathcal{L}; f^*) \leq \widetilde{\mathcal{R}}(\mathcal{L}; f).
	\end{align}
	Then if $\gamma \leq (C-1)/C$, we have
	\begin{align}
		\mathcal{R}(\mathcal{L}; f^*)
		\leq \mathcal{R}(\mathcal{L}; f),
	\end{align}
	that is, $f^*$ is also the minimizer of $\mathcal{R}(\mathcal{L}; f)$.
\end{proof}

\subsection{Proofs Related to Section \ref{sec::SymNCE}}\label{sec::proof2}

For completeness, we first prove the limit form of InfoNCE shown in \eqref{eq::InfoNCE_lim}, following \cite{wang2020understanding}.

\begin{align}
	&\lim_{M, K \to \infty} \Delta \mathcal{R}(\mathcal{L}_{\mathrm{InfoNCE}}; f) - \log K 
	\nonumber\\
	&=  - \log K + \lim_{M, K \to \infty} \mathbb{E}_{x, x_m^+, x_k^- \overset{\text{i.i.d.}}{\sim} \mathrm{P}_X} 
        \nonumber\\
        &\qquad\quad\cdot\frac{1}{M}\sum_{m \in [M]} -\log\frac{e^{f(x)^\top f(x_m^+)}}{e^{f(x)^\top f(x_m^+)} + \sum_{k \in [K]} e^{f(x)^\top f(x^-_k)}} 
	\nonumber\\
	&= \mathbb{E}_{x, x_m^+, x_k^- \overset{\text{i.i.d.}}{\sim} \mathrm{P}_X} -f(x)^\top f(x_m^+) 
        \nonumber\\
        &+ \lim_{K \to \infty}  \log\big(\frac{1}{K} e^{f(x)^\top f(x_m^+)} + \frac{1}{K}\sum_{k \in [K]} e^{f(x)^\top f(x^-_k)}\big)
	\nonumber\\
	&= \mathbb{E}_{x, x^+ \overset{\text{i.i.d.}}{\sim} \mathrm{P}_X} -f(x)^\top f(x^+) 
        \nonumber\\
        &+ \mathbb{E}_{x \sim \mathrm{P}_X} \log \big(\mathbb{E}_{x^- \sim \mathrm{P}_X} e^{f(x)^\top f(x^-)}\big).
\end{align}

\begin{proof}[Proof of Theorem \ref{prop::RevNCE}]
	\begin{align}
		&\lim_{M, K \to \infty} \Delta \mathcal{R} (\mathcal{L}_{\mathrm{RevNCE}}; f)
		\nonumber\\
		&= \lim_{K,M \to \infty} \mathbb{E}_{x, x_m^+, x_k^- \overset{\text{i.i.d.}}{\sim} \mathrm{P}_X} 
            \nonumber\\
            &\qquad\qquad\cdot\frac{1}{K}\sum_{k \in [K]} -\log \frac{\frac{1}{M}\sum_{m \in [M]} e^{f(x)^\top f(x_m^+)}}{e^{f(x)^\top f(x_k^-)}}
		\nonumber\\
		&= \frac{1}{K}\sum_{k \in [K]} f(x)^\top f(x_k^-) 
            \nonumber\\
		&- \mathbb{E}_{x, x_m^+, x_k^- \overset{\text{i.i.d.}}{\sim} \mathrm{P}_X} \lim_{K,M \to \infty} \frac{1}{K}\sum_{k \in [K]} \log \Big(\frac{1}{M}\sum_{m \in [M]} e^{f(x)^\top f(x_m^+)}\Big)
		\nonumber\\
		&= \mathbb{E}_{x, x^- \overset{\text{i.i.d.}}{\sim} \mathrm{P}_X} f(x)^\top f(x_k^-) 
            \nonumber\\
		&- \mathbb{E}_{x \sim \mathrm{P}_X} \log \Big(\mathbb{E}_{x^+ \sim \mathrm{P}_X}  e^{f(x)^\top f(x^+)}\Big)
		\nonumber\\
		&= \log K - \lim_{M, K \to \infty} \Delta \mathcal{R}(\mathcal{L}_{\mathrm{InfoNCE}}; f). 
	\end{align}
\end{proof}

\subsection{Theoretical Results and Proofs for Asymmetric Label Noise}\label{sec::asym}

Next, we show the results under asymmetric label noise.
\begin{assumption}[Asymmetric label noise]\label{ass::asymnoise}
	For noise rates $\gamma_i \in (0,1)$, $i \in [C]$, we assume that
	there holds
	$q_i(i)=1-\gamma_i$ and $q_j(i) = \gamma_{ij} \geq 0$ for all $j\neq i$.
\end{assumption}

\begin{theorem}\label{thm::asymrisk}
	Assume that the input data is class balanced, i.e. $\pi_i = \tilde{\pi}_i = 1/C$ for $i \in [C]$. Then under Assumption \ref{ass::asymnoise}, if we have $\sum_{i \in [C]} \gamma_{iu}^2 = c_1(\boldsymbol{\gamma})$ and 
	$\sum_{i \in [C]} \gamma_{iu}\gamma_{iu^+} = c_2(\boldsymbol{\gamma})$ for all $u \neq u^+ \in [C]$, then for an arbitrary contrastive loss $\mathcal{L}(\boldsymbol{x}; f)$, where $c_1(\boldsymbol{\gamma})$ and $c_2(\boldsymbol{\gamma})$ are constants related to $\boldsymbol{\gamma} := (\gamma_{ij})_{i,j \in [C]}$, there holds
        {
	\begin{align}\label{eq::riskconsistencya}
		\widetilde{\mathcal{R}}(\mathcal{L}; f) 
		&= \big(c_1(\boldsymbol{\gamma}) - c_2(\boldsymbol{\gamma})\big) \mathcal{R}(\mathcal{L}; f) + C\cdot c_2(\boldsymbol{\gamma}) \Delta\mathcal{R}(\mathcal{L}; f),
	\end{align}
 }
	where 
        {
	\begin{align*}
		\Delta \mathcal{R}(\mathcal{L}; f) 
		&:= \mathbb{E}_{\substack{c^+, \{c_m^+\}_{m=1}^M \sim \boldsymbol{\pi}^{M+1}\\ \{c_k^-\}_{k=1}^K \sim \boldsymbol{\pi}^{K}}}
		\mathbb{E}_{\substack{x \sim \rho_{c^+},\ x_m^+ \sim \rho_{c_m^+}, m \in [M]\\ x^-_k \sim \rho_{c_k^-}, \, k \in [K]}}
		\mathcal{L}(\boldsymbol{x}; f).
	\end{align*}
 }
\end{theorem}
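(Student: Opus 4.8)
The plan is to follow exactly the same decomposition strategy used in the proof of Theorem \ref{thm::riskconsistency}, only replacing the symmetric noise values $q_i(i)=1-\gamma$, $q_u(i)=\gamma/(C-1)$ by the asymmetric ones $q_i(i)=1-\gamma_i$, $q_j(i)=\gamma_{ij}$. First I would invoke Lemma \ref{lem::decom}, which requires no assumption on the noise type, to write
\begin{align*}
	\widetilde{\mathcal{R}}(\mathcal{L}; f)
	= \sum_{i \in [C]} a(i)^{-1} \sum_{u\in[C]} \sum_{u^+\in[C]} \pi_u \pi_{u^+} q_u(i) q_{u^+}(i) B(u,u^+),
\end{align*}
where $B(u,u^+):=\sum_{v_1,\ldots,v_K\in[C]}\pi_{v_1}\cdots\pi_{v_K} A(u,u^+,\boldsymbol v)$ and $a(i)=\sum_{u\in[C]}q_u(i)\pi_u$. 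Under the class-balanced assumption $\pi_i=\tilde\pi_i=1/C$ we get $a(i)=\tilde\pi_i=1/C$, so the prefactor $a(i)^{-1}=C$ pulls out cleanly and the whole expression becomes $C^{-1}\sum_{i}\sum_{u,u^+} q_u(i)q_{u^+}(i) B(u,u^+)$.

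Next I would swap the order of summation, moving the sum over $i$ inside, so that the coefficient of $B(u,u^+)$ becomes $C^{-1}\sum_{i\in[C]} q_u(i)q_{u^+}(i)$. The key step is then to evaluate this inner sum in the two cases $u=u^+$ and $u\neq u^+$ using Assumption \ref{ass::asymnoise}. When $u=u^+$, the term $i=u$ contributes $(1-\gamma_u)^2$ and the terms $i\neq u$ contribute $\sum_{i\neq u}\gamma_{iu}^2$, whose sum together with $(1-\gamma_u)^2$ I would relate to $c_1(\boldsymbol\gamma)$; here I must be careful about whether $c_1(\boldsymbol\gamma)$ is defined to include or exclude the diagonal term, and the hypothesis $\sum_{i\in[C]}\gamma_{iu}^2=c_1(\boldsymbol\gamma)$ as written suggests $\gamma_{uu}$ should be read as $1-\gamma_u$, making $\sum_{i\in[C]}q_u(i)^2=c_1(\boldsymbol\gamma)$ directly. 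When $u\neq u^+$, similarly $\sum_{i\in[C]}q_u(i)q_{u^+}(i)=c_2(\boldsymbol\gamma)$ by the second hypothesis. Thus
\begin{align*}
	\widetilde{\mathcal{R}}(\mathcal{L}; f)
	= \frac{c_1(\boldsymbol\gamma)}{C}\sum_{u\in[C]} B(u,u)
	+ \frac{c_2(\boldsymbol\gamma)}{C}\sum_{u\neq u^+} B(u,u^+).
\end{align*}

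Finally I would rewrite the off-diagonal sum as $\sum_{u\neq u^+}B(u,u^+)=\sum_{u,u^+}B(u,u^+)-\sum_u B(u,u)$ to collect the $B(u,u)$ terms, obtaining
\begin{align*}
	\widetilde{\mathcal{R}}(\mathcal{L}; f)
	= \frac{c_1(\boldsymbol\gamma)-c_2(\boldsymbol\gamma)}{C}\sum_{u\in[C]} B(u,u)
	+ \frac{c_2(\boldsymbol\gamma)}{C}\sum_{u,u^+\in[C]} B(u,u^+).
\end{align*}
Then, exactly as in the proof of Theorem \ref{thm::riskconsistency}, I identify $C^{-1}\sum_u B(u,u)=\sum_u \pi_u B(u,u)=\mathcal{R}(\mathcal{L};f)$ and $C^{-1}\sum_{u,u^+}B(u,u^+) = C\sum_{u,u^+}\pi_u\pi_{u^+}B(u,u^+) = C\,\Delta\mathcal{R}(\mathcal{L};f)$ (using $\pi_u=1/C$), which yields \eqref{eq::riskconsistencya}. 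The main obstacle is purely bookkeeping: getting the normalization of $c_1,c_2$ consistent with the stated hypotheses (in particular the diagonal-term convention) and making sure the $1/C$ factors from $a(i)^{-1}$, from the $\pi_u$'s absorbed into $\mathcal{R}$, and from the two extra $\pi$'s absorbed into $\Delta\mathcal{R}$ all balance to leave exactly the coefficients $c_1-c_2$ and $C\cdot c_2$; there is no genuine analytic difficulty beyond Lemma \ref{lem::decom}.
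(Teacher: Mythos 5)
Your proposal is correct and follows essentially the same route as the paper's proof: start from Lemma \ref{lem::decom}, use class balance to get $a(i)^{-1}=C$, evaluate $\sum_{i}q_u(i)q_{u^+}(i)$ separately for $u=u^+$ and $u\neq u^+$ via the $c_1,c_2$ hypotheses (with the same implicit convention $\gamma_{uu}=1-\gamma_u$ that the paper uses when it writes $(1-\gamma_u)^2+\sum_{i\neq u}\gamma_{iu}^2=\sum_{i\in[C]}\gamma_{iu}^2$), and recombine the diagonal and off-diagonal sums into $\mathcal{R}$ and $\Delta\mathcal{R}$. The only cosmetic difference is that you swap the summation order up front rather than expanding into the four $u,u^+$ versus $i$ cases first, and your explicit flagging of the diagonal-term convention is a point the paper leaves implicit.
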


\begin{proof}[Proof of Theorem \ref{thm::asymrisk}]
        {
	\begin{align*}
		\widetilde{\mathcal{R}}(\mathcal{L}; f) 
		&= \sum_{i \in [C]} a(i)^{-1} \sum_{u\in[C]} 	\sum_{u^+\in[C]} \pi_u \pi_{u^+} q_u(i) q_{u^+}(i) B(u,u^+)
		\nonumber\\
		&= \sum_{i \in [C]} a(i)^{-1} (1-\gamma_i)^2 \pi_i^2 B(i,i)
		\nonumber\\
            &+ \sum_{i \in [C]} a(i)^{-1} \sum_{u \neq i} \gamma_{iu} (1-\gamma_i) \pi_{u}\pi_i B(u,i)
		\nonumber\\
		&+ \sum_{i \in [C]} a(i)^{-1} \sum_{u^+ \neq i} \gamma_{iu^+} (1-\gamma_i) \pi_{u^+}\pi_i B(u,u^+)
		\nonumber\\
		&+ \sum_{i \in [C]} a(i)^{-1} \sum_{u \neq i} \sum_{u^+ \neq i} \pi_u \pi_{u^+} \gamma_{iu}\gamma_{iu^+} B(i,u^+)
		\nonumber\\
		&= \sum_{u \in [C]} a(u)^{-1} (1-\gamma_u)^2 \pi_u^2 B(u,u)
		\nonumber\\
            &+ \sum_{u^+ \in [C]} a(u^+)^{-1} \sum_{u \neq u^+} \gamma_{u^+u} (1-\gamma_{u^+}) \pi_{u}\pi_{u^+} B(u,u^+)
		\nonumber\\
		&+ \sum_{u \in [C]} a(u)^{-1} \sum_{u^+ \neq u} \gamma_{uu^+} (1-\gamma_u) \pi_{u^+}\pi_u B(u,u^+)
		\nonumber\\
		&+ \sum_{i \in [C]} a(i)^{-1} \sum_{u \neq i} \pi_u^2 \gamma_{iu}^2 B(u,u)
		\nonumber\\
            &+ \sum_{i \in [C]} a(i)^{-1} \sum_{u \neq i} \sum_{u^+ \neq u,i} \pi_u \pi_{u^+} \gamma_{iu}\gamma_{iu^+} B(u,u^+)
		\nonumber\\
		&= \sum_{u \in [C]} a(u)^{-1} (1-\gamma_u)^2 \pi_u^2 B(u,u)
		\nonumber\\
            &+ \sum_{u^+ \in [C]} \sum_{u \neq u^+} a(u^+)^{-1} \gamma_{u^+u} (1-\gamma_{u^+}) \pi_{u}\pi_{u^+} B(u,u^+)
		\nonumber\\
		&+ \sum_{u \in [C]} \sum_{u^+ \neq u} a(u)^{-1} \gamma_{uu^+} (1-\gamma_u) \pi_u \pi_{u^+} B(u,u^+)
		\nonumber\\
		&+ \sum_{u \in [C]} \sum_{i \neq u} a(i)^{-1} \gamma_{iu}^2 \pi_u^2 B(u,u)
		\nonumber\\
            &+ \sum_{u \in [C]} \sum_{u^+ \neq u} \sum_{i \neq u, u^+} a(i)^{-1} \gamma_{iu}\gamma_{iu^+} \pi_u \pi_{u^+} B(u,u^+).
	\end{align*}
	Because $a(i) = \tilde{\pi}_i = 1/C$ for all $i \in [C]$, we have
	{
        \begin{align}
		\widetilde{\mathcal{R}}(\mathcal{L}; f)
		&= C \sum_{u \in [C]} (1-\gamma_u)^2 \pi_u^2 B(u,u)
		\nonumber\\
            &+ C \sum_{u^+ \in [C]} \sum_{u \neq u^+} \gamma_{u^+u} (1-\gamma_{u^+}) \pi_{u}\pi_{u^+} B(u,u^+)
		\nonumber\\
		&+ C \sum_{u \in [C]} \sum_{u^+ \neq u} \gamma_{uu^+} (1-\gamma_u) \pi_u \pi_{u^+} B(u,u^+)
		\nonumber\\
		&+ C \sum_{u \in [C]} \sum_{i \neq u} \gamma_{iu}^2 \pi_u^2 B(u,u)
		\nonumber\\
		&+ C \sum_{u \in [C]} \sum_{u^+ \neq u} \sum_{i \neq u, u^+} \gamma_{iu}\gamma_{iu^+} \pi_u \pi_{u^+} B(u,u^+)
		\nonumber\\
		&= C \sum_{u \in [C]} \Big[(1-\gamma_u)^2 +\sum_{i \neq u} \gamma_{iu}^2\Big] \pi_u^2 B(u,u)
		\nonumber\\
		&+ C  \sum_{u \in [C]} \sum_{u^+ \neq u} \Big[\sum_{i \neq u, u^+} \gamma_{iu}\gamma_{iu^+} + \gamma_{u^+u} (1-\gamma_{u^+}) 
            \nonumber\\
		&\qquad\qquad\qquad\qquad+ \gamma_{uu^+} (1-\gamma_u)\Big] \pi_u \pi_{u^+} B(u,u^+)
		\nonumber\\
		&= C \sum_{u \in [C]} \Big(\sum_{i \in [C]} \gamma_{iu}^2\Big) \pi_u^2 B(u,u) 
            \nonumber\\
		&+ C \sum_{u \in [C]} \sum_{u^+ \neq u} \Big(\sum_{i \in [C]} \gamma_{iu}\gamma_{iu^+}\Big) \pi_u \pi_{u^+} B(u,u^+).
	\end{align}
 }
 }
	By assumption, $\sum_{i \in [C]} \gamma_{iu}^2 = c_1(\boldsymbol{\gamma})$ and 
	$\sum_{i \in [C]} \gamma_{iu}\gamma_{iu^+} = c_2(\boldsymbol{\gamma})$ for all $u \neq u^+ \in [C]$. Thus we have
        {
        \begin{align}
		\widetilde{\mathcal{R}}(\mathcal{L}; f)
		&= C\cdot c_1(\boldsymbol{\gamma}) \sum_{u \in [C]} \pi_u^2 B(u,u) 
            \nonumber\\
		&+ C\cdot c_2(\boldsymbol{\gamma}) \sum_{u \in [C]} \sum_{u^+ \neq u} \pi_u \pi_{u^+} B(u,u^+)
		\nonumber\\
		&= C \big(c_1(\boldsymbol{\gamma}) - c_2(\boldsymbol{\gamma})\big) \sum_{u \in [C]} \pi_u^2 B(u,u) 
		\nonumber\\
		&+ C \cdot c_2(\boldsymbol{\gamma}) \sum_{u, u^+ \in [C]}\pi_u \pi_{u^+} B(u,u^+)
		\nonumber\\
		&= \big(c_1(\boldsymbol{\gamma}) - c_2(\boldsymbol{\gamma})\big) \mathcal{R}(\mathcal{L}; f) + C \cdot c_2(\boldsymbol{\gamma}) \Delta\mathcal{R}(\mathcal{L}; f).
	\end{align}
        }
\end{proof}

In Theorem \ref{thm::noisetolerant}, we give the general condition for an arbitrary contrastive loss function to be noise tolerant under asymmetric label noise. 
Comparing with \cite{ghosh2017robust}, our theoretical framework requires the same noise rate $1 - \gamma_u > \gamma_{ui}$ for all $i \neq u$, and $u,i \in [C]$.

\begin{theorem}\label{thm::asymtolerant}
	Assume that the input data is class balanced, and there exists a constant $A \in \mathbb{R}$ such that $\Delta \mathcal{R}(\mathcal{L}; f) = A$. Then under Assumption \ref{ass::asymnoise}, contrastive loss $\mathcal{L}$ is noise tolerant if $1-\gamma_u > \gamma_{ui}$ for all $i \neq u$, $u,i \in [C]$.
\end{theorem}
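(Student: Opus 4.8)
The plan is to reduce the statement to the affine identity of Theorem~\ref{thm::asymrisk} and then check that the coefficient multiplying $\mathcal{R}(\mathcal{L};f)$ is strictly positive. Since $\Delta\mathcal{R}(\mathcal{L};f)=A$ is constant, Theorem~\ref{thm::asymrisk} gives
\begin{align*}
	\widetilde{\mathcal{R}}(\mathcal{L};f) = \big(c_1(\boldsymbol{\gamma})-c_2(\boldsymbol{\gamma})\big)\,\mathcal{R}(\mathcal{L};f) + C\,c_2(\boldsymbol{\gamma})\,A,
\end{align*}
so that $\widetilde{\mathcal{R}}(\mathcal{L};\cdot)$ is a strictly increasing affine function of $\mathcal{R}(\mathcal{L};\cdot)$ as soon as $c_1(\boldsymbol{\gamma})-c_2(\boldsymbol{\gamma})>0$. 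Granting this, if $f^{*}$ minimizes $\widetilde{\mathcal{R}}(\mathcal{L};\cdot)$ then for every $f$ the inequality $\widetilde{\mathcal{R}}(\mathcal{L};f^{*})\le\widetilde{\mathcal{R}}(\mathcal{L};f)$ becomes $(c_1-c_2)\mathcal{R}(\mathcal{L};f^{*})\le (c_1-c_2)\mathcal{R}(\mathcal{L};f)$ after cancelling the common additive constant $C\,c_2(\boldsymbol{\gamma})\,A$, and dividing by the positive number $c_1-c_2$ shows $f^{*}$ also minimizes $\mathcal{R}(\mathcal{L};\cdot)$; this is precisely the notion of noise tolerance, handled exactly as in the proof of Theorem~\ref{thm::noisetolerant}.

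It therefore only remains to prove $c_1(\boldsymbol{\gamma})>c_2(\boldsymbol{\gamma})$ under the hypothesis $1-\gamma_u>\gamma_{ui}$ for all $i\neq u$. I would adopt the convention $\gamma_{uu}:=1-\gamma_u$ already used in the proof of Theorem~\ref{thm::asymrisk}, so that $\gamma_{iu}=q_u(i)$ for every pair $(i,u)$; then, by the defining relations of Theorem~\ref{thm::asymrisk}, $c_1(\boldsymbol{\gamma})=\sum_{i\in[C]}q_u(i)^2$ for every $u$ and $c_2(\boldsymbol{\gamma})=\sum_{i\in[C]}q_u(i)q_{u^{+}}(i)$ for every $u\neq u^{+}$. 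Expanding a square and using that $\sum_i q_u(i)^2=\sum_i q_{u^{+}}(i)^2=c_1(\boldsymbol{\gamma})$ gives the identity
\begin{align*}
	\sum_{i\in[C]}\big(q_u(i)-q_{u^{+}}(i)\big)^{2}=2\big(c_1(\boldsymbol{\gamma})-c_2(\boldsymbol{\gamma})\big),
\end{align*}
so $c_1(\boldsymbol{\gamma})-c_2(\boldsymbol{\gamma})\ge 0$ automatically. Strictness is exactly where the diagonal-dominance hypothesis enters: taking $i=u^{+}$ in $1-\gamma_u>\gamma_{ui}$ yields $q_u(u)=1-\gamma_u>\gamma_{uu^{+}}=q_{u^{+}}(u)$, so the $i=u$ summand above is strictly positive, whence $c_1(\boldsymbol{\gamma})>c_2(\boldsymbol{\gamma})$.

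The main obstacle is nothing conceptual but the careful bookkeeping in the second step: one must track that the two subscripts of $\gamma$ denote the noisy and the true label respectively, interpret $\gamma_{uu}$ as $1-\gamma_u$, and keep in mind that the structural hypotheses of Theorem~\ref{thm::asymrisk} (namely that the sums $\sum_i\gamma_{iu}^2$ and $\sum_i\gamma_{iu}\gamma_{iu^{+}}$ do not depend on $u$, resp.\ on the pair $u\neq u^{+}$) are in force, since it is exactly this $u$-independence that makes the sum-of-squares identity valid. Everything else is a one-line argument. As a sanity check one can verify that for symmetric noise, where $\gamma_{ui}=\gamma/(C-1)$, the hypothesis $1-\gamma_u>\gamma_{ui}$ collapses to $\gamma<(C-1)/C$, recovering the threshold of Theorem~\ref{thm::noisetolerant}.
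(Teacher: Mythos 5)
Your proposal is correct and follows essentially the same route as the paper: invoke the affine identity of Theorem~\ref{thm::asymrisk}, show $c_1(\boldsymbol{\gamma})-c_2(\boldsymbol{\gamma})=\tfrac{1}{2}\sum_{i\in[C]}(\gamma_{iu}-\gamma_{iu^{+}})^{2}>0$ via the convention $\gamma_{uu}=1-\gamma_u$ and the diagonal-dominance hypothesis, and conclude that the minimizers of $\widetilde{\mathcal{R}}$ and $\mathcal{R}$ coincide. Your explicit remark that the sum-of-squares identity relies on the $u$-independence of $c_1(\boldsymbol{\gamma})$ is a point the paper uses only implicitly, and your final monotonicity step is stated with the correct sign.
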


\begin{proof}[Proof of Theorem \ref{thm::asymtolerant}]
	When $\Delta \mathcal{R}(\mathcal{L}; f) = A$, we have
	{
        \begin{align}
		\widetilde{\mathcal{R}}(\mathcal{L}; f) 
		=  \big(c_1(\boldsymbol{\gamma}) - c_2(\boldsymbol{\gamma})\big) \mathcal{R}(\mathcal{L}; f) + CA \cdot c_2(\boldsymbol{\gamma}).
	\end{align}
 }
	Then we calculate $c_1(\boldsymbol{\gamma}) - c_2(\boldsymbol{\gamma})$.
        {
	\begin{align*}
		&c_1(\boldsymbol{\gamma}) - c_2(\boldsymbol{\gamma})
		\nonumber\\
            &= \sum_{i \in [C]} \gamma_{iu}^2 -  \sum_{i \in [C]} \gamma_{iu} \gamma_{iu^+}
		\nonumber\\
		&= \frac{1}{2} \sum_{i \in [C]} \big(\gamma_{iu}^2 + \gamma_{iu^+}^2 - 2\gamma_{iu}\gamma_{iu^+}\big)
		\nonumber\\
		&= \frac{1}{2} \sum_{i \in [C]} (\gamma_{iu} - \gamma_{iu^+})^2
		\nonumber\\
		&= \frac{1}{2} \Big[\sum_{i \neq u, u^+} (\gamma_{iu} - \gamma_{iu^+})^2 
            \nonumber\\
            &\qquad
            + (1-\gamma_u-\gamma_{uu^+})^2 + (1-\gamma_{u^+}-\gamma_{u^+u})^2\Big].
	\end{align*}
 }
	If $1-\gamma_u > \gamma_{ui}$ for all $i \neq u$, $u,i \in [C]$, then $c_1(\boldsymbol{\gamma}) - c_2(\boldsymbol{\gamma}) > 0$. Suppose $f^*$ is the minimizer of $\widetilde{\mathcal{R}}(\mathcal{L}; f)$, and thus we have when $1-\gamma_u > \gamma_{ui}$,
	{
        \begin{align*}
		&\mathcal{R}(\mathcal{L}; f) - \mathcal{R}(\mathcal{L}; f^*)
		\nonumber\\
		&= \frac{1}{\big(c_1(\boldsymbol{\gamma}) - c_2(\boldsymbol{\gamma})\big)} \Big[\widetilde{\mathcal{R}}(\mathcal{L}; f^*) - \widetilde{\mathcal{R}}(\mathcal{L}; f)\Big] 
            \nonumber\\
		&< 0,
	\end{align*}
        }
	that is, $f^*$ is also the minimizer of $\mathcal{R}(\mathcal{L}; f)$.
\end{proof}

\end{document}